\documentclass[10pt,twocolumn,letterpaper]{article}

\usepackage[pagenumbers]{cvpr} % To force page numbers, e.g. for an arXiv version

\usepackage[utf8]{inputenc}
\usepackage[T1]{fontenc}
\usepackage{nicefrac}
\usepackage{microtype}
\usepackage{tabularx}
\usepackage{multirow}
\usepackage{amsthm}
\usepackage{mathtools}
\usepackage{adjustbox}
\usepackage{wrapfig}

\newcolumntype{Y}{>{\centering\arraybackslash}X}
\usepackage[textsize=tiny]{todonotes}

\usepackage[pagebackref,breaklinks,colorlinks,allcolors=cvprblue]{hyperref}
\usepackage[nameinlink,capitalize]{cleveref}

\newcommand{\answerTODO}[1][]{\textcolor{red}{\bf [TODO]}}
\newtheorem{innercustomthm}{Theorem}
\newenvironment{customthm}[1]
  {\renewcommand\theinnercustomthm{#1}\innercustomthm}
  {\endinnercustomthm}

\newtheorem{innercustomlemma}{Lemma}
\newenvironment{customlemma}[1]
  {\renewcommand\theinnercustomlemma{#1}\innercustomlemma}
  {\endinnercustomlemma}

\theoremstyle{plain}
\newtheorem{theorem}{Theorem}[section]

\newtheorem{lemma}[theorem]{Lemma}

\theoremstyle{definition}
\newtheorem{definition}[theorem]{Definition}

\theoremstyle{remark}

\usepackage[textsize=tiny]{todonotes}

\definecolor{cvprblue}{rgb}{0.21,0.49,0.74}
\usepackage[pagebackref,breaklinks,colorlinks,allcolors=cvprblue]{hyperref}

\def\paperID{*****} % *** Enter the Paper ID here
\def\confName{CVPR}
\def\confYear{2026}

\title{Contrastive Attention Mitigates Spectral Bias in Spiking Transformers}

\author{%
Xiaoli Liu \textsuperscript{1} \quad\quad
Malu Zhang \textsuperscript{1} \thanks{Corresponding author} \quad\quad
Yang Yang \textsuperscript{1} \\
\textsuperscript{1}School of Computer Science and Engineering, \\ University of Electronic Science and Technology of China\\
{\ttfamily\small
liuxiaoli0720@std.uestc.edu.cn
\quad maluzhang@uestc.edu.cn}
}

\begin{document}
\maketitle
\begin{abstract}
Spiking Transformers merge the energy-efficiency of spiking neural networks (SNNs) with the representational power of self-attention, creating a promising architecture for high-performance, energy-efficient computation. However, a performance gap persists versus its counterparts in artificial neural networks (ANNs). Unlike prior works attributing this to binary activations, we reveal that both spiking neurons and spiking self-attention (SSA) act as low-pass filters through multiscale spectral analysis. This characteristic leads to the dissipation of high-frequency components. To address this issue, we propose the Spiking Contrastive Attention (SCA) paradigm, which draw inspiration from the edge-detection and differential sensing properties of biological visual system. By extracting contrast prototypes via global contrastive aggregation and applying local differential refinement, SCA effectively enhances high-frequency information. Extensive experiments show that SCA is a general module that consistently boosts Spiking Transformers across image classification, semantic segmentation, and event-based tracking. Furthermore, it achieves lower complexity, offering superior efficiency over original SSA. These results establish its potential as a fundamental building block for energy-efficient Spiking Transformers.
\end{abstract}    
\section{Introduction}
\label{sec:intro}

Spiking Neural Networks (SNNs) \cite{izhikevich2003simple,maass1997networks} offer an energy-efficient alternative to Artificial Neural Networks (ANNs) \cite{zhouspikformer,yao2023spike}. This efficiency derives primarily from spiking neurons that leverage spatiotemporal dynamics to mimic the computational principles of the human brain \cite{maass1997networks,masquelier2008spike}. Spiking neurons transmit information asynchronously \cite{wei2024event} and consume energy only when receiving or emitting spikes, remaining silent otherwise. This binary activation mechanism allows SNNs to substitute the energy-intensive Multiply-Accumulate (MAC) operations \cite{davies2018loihi,wang2023spatial,zhang2024anp} essential to ANNs with simpler spike-based accumulation. Capitalizing on this energy efficiency, Spiking Transformers integrate the powerful Transformer architecture with SNN characteristics, garnering increasing research attention \cite{yao2025scaling}.

\begin{figure}[t]
    \centering    
    \includegraphics[width=\linewidth]{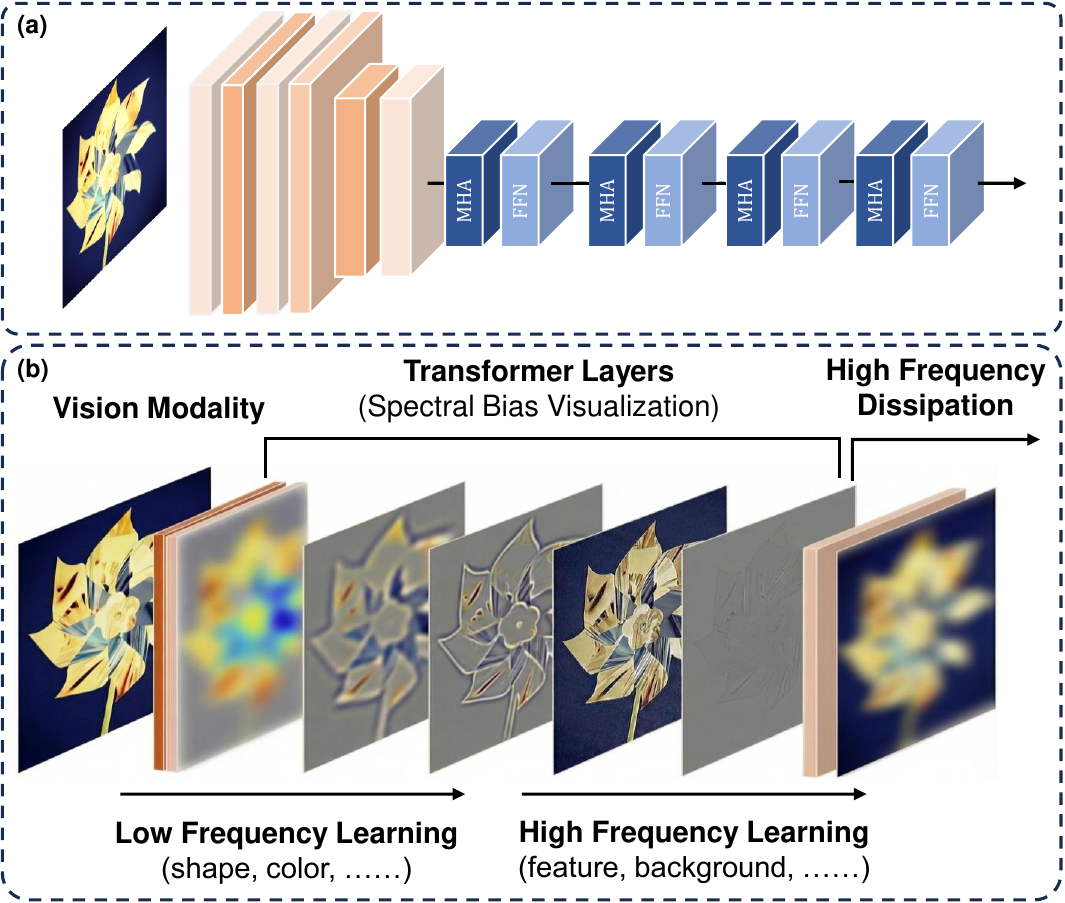}  % 设置页面宽度
    \vspace{-5mm}
    \caption{The description of spectral bias.}
    \label{fig:problem description}
    \vspace{-5mm}
\end{figure}

\begin{figure*}[t]
    \centering    
    \includegraphics[width=\linewidth]{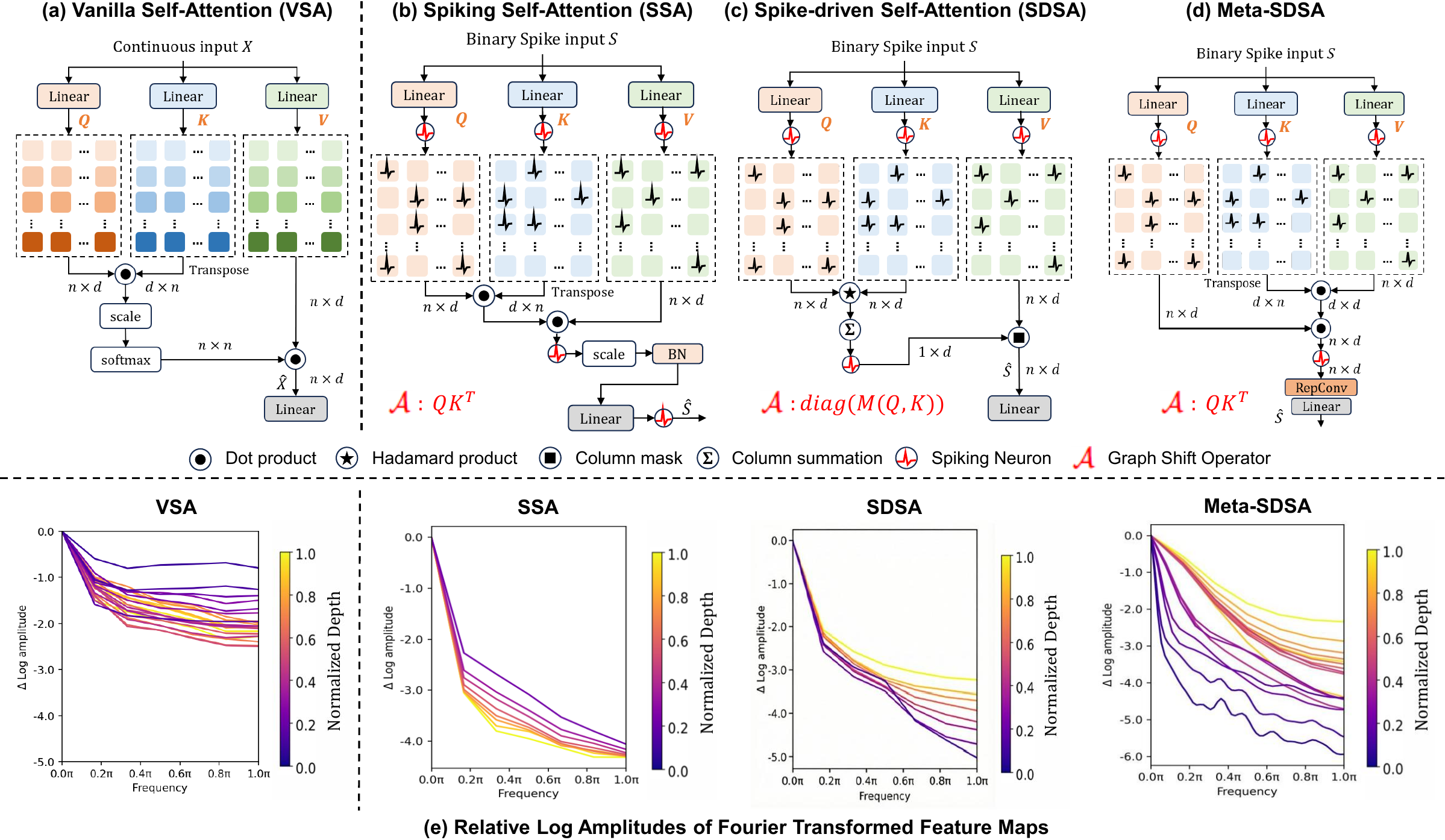}
    \vspace{-5mm}
    \caption{(a) Vanilla Self-Attention (VSA). (b)-(d) Softmax-free Spiking Self-Attention (SSA, SDSA and Meta-SDSA). (e) Relative Logarithmic Amplitude of Fourier Transform Features.}
    \vspace{-5mm}
    \label{fig:attention_fourier}
\end{figure*}

Currently, the primary obstacle for SNNs is the performance gap relative to ANNs \cite{li2021free}. Existing studies typically attribute this bottleneck to the quantization errors arising from binary spike communication \cite{guo2022loss,li2021differentiable,bu2023optimal}. Although strategies based on surrogate gradients \cite{wu2018spatio} and improved spiking neurons \cite{luo2024integer} have mitigated these issues, the representational limitations of SNNs persist.

This work systematically investigates this issue via spectral analysis, identifying \textbf{\textit{Spectral Bias}} \cite{rahaman2019spectral,park2022vision} as the critical factor limiting Spiking Transformers' performance. We analyze spiking neurons and three representative spiking self-attention (SSA) mechanisms, the results reveal that the SSA mechanism, relying on spiking neurons and softmax-free \cite{zhouspikformer,yaospike,yao2023spike}, is equivalent to a low-pass filter (as shown in Fig. \ref{fig:attention_fourier}). This inherent characteristic induces a strong spectral bias during training: the network prioritizes low-frequency components (e.g., contours and backgrounds) while suppressing high-frequency components (e.g., textures, edges, and fine structures), ultimately impairing the model's capacity to capture fine visual features (as shown in Fig. \ref{fig:problem description}).

% To overcome this limitation, we propose a \textbf{Spiking Contrastive Attention (SCA)} paradigm. Diverging from the original SSA mechanism, we partition the feature space into excitatory and inhibitory streams. Through a spike-driven learnable membrane potential fusion strategy, this paradigm dynamically integrates excitatory-inhibitory contrast tokens, effectively simulating the differential perception characteristics of biological visual systems to enhance feature sharpness. This is also consistent with edge-detection mechanisms that enhance high-frequency information. \cite{suzuki2003neural,gonzalez2009digital}. Furthermore, we design a two-stage architecture. This architecture discards the computationally intensive full-rank attention mechanism in favor of extracting contrast tokens as global prototypes. These tokens interact with local queries in a patch-wise manner, enabling the network to efficiently aggregate global context while optimizing local representations through a differential recursive process. The main contributions are summarized as follows:

To overcome this limitation, we propose \textbf{Spiking Contrastive Attention (SCA)}, a two-stage attention mechanism inspired by the edge-detection and differential sensing property of biological visual system \cite{suzuki2003neural,gonzalez2009digital}. In Stage I, SCA constructs complementary excitatory and inhibitory contrast prototypes from spatially compressed query features. These prototypes aggregate global key-value information through two parallel streams, whose responses are adaptively integrated by a differential membrane-potential fusion to generate a compact global contrast map. In Stage II, full-resolution query tokens interact with the contrast prototypes and the global contrast map through a second differential fusion, thereby refining local representations and emphasizing edge-related high-frequency information. By replacing quadratic token interactions with efficient interactions between \(N\) query tokens and \(n\) contrast prototypes (\(n \ll N\)), SCA effectively aggregates global context while reducing the complexity from \(\mathcal{O}(TN^{2}C)\) to \(\mathcal{O}(TNnC)\). The main contributions are summarized as follows:
\begin{itemize}
    \item We presents theoretical analysis of Spiking Transformer in the frequency-domain perspective at both neuronal and architectural levels. We reveal that the spiking neuron and several representative spiking self-attention mechanisms exhibit low-pass characteristics, which leading to severe spectral bias.
    \item To address this issue, we propose Spiking Contrastive Attention (SCA), which inspired by the edge detection and differential perception property of biological visual system. This achieves efficient aggregation of global context, and enhancing the model's ability to represent high-frequency information.
    % \item To address this issue, we propose Spiking Contrastive Attention (SCA). This achieves efficient aggregation of global context, enhancing the model's ability to represent high-frequency information.
    \item Extensive experiments demonstrate that SCA achieves substantial performance improvements across diverse tasks, including image classification, semantic segmentation, and event-based tracking. These results establish its potential as a fundamental building block for Spiking Transformers.
\end{itemize}
\section{Preliminary and Related Works}
\label{main:related works}

\subsection{Spiking Neuron Models}
\label{related works: spiking neuron models}

SNNs achieve spike-driven processing using nonlinear spiking neurons. The Leaky Integrate-and-Fire (LIF) model is a widely adopted mathematical abstract for spike-driven behavior, achieving a balance between biological plausibility and computational efficiency. This work focuses on the Spiking Transformer utilizing LIF neurons as its primary subject of analysis. The dynamic model of LIF neuron is described as:
\begin{align}
    & H[t]=V[t-1]+\frac{1}{\tau}(X[t]-(V[t-1]-V_{reset})),  \label{eq: lif1}\\
    & S[t]=\Theta(H[t]-V_{th}),  \label{eq: lif2}\\
    & V[t]=H[t](1-S[t])+V_{reset}S[t], \label{eq: lif3}
\end{align}
where \(\tau\) is the membrane time constant, and \(X[t]\) is the input current at time step \(t\). When the membrane potential \(H[t]\) exceeds the firing threshold \(V_{th}\), the spike neuron will trigger a spike \(S[t]\). \(\Theta(\cdot)\) is the Heaviside step function which equals \(1\) for \(v \ge 0\) and \(0\) otherwise. \(V[t]\) represents the membrane potential after the trigger event which equals \(H[t]\) if no spike is generated, and otherwise equals to the reset potential \(V_{reset}\).

\subsection{Spiking Transformer}
\label{related works: spiking transformer}

The Vanilla Self-Attention (VSA) serves as the cornerstone of modern Transformers for global information aggregation \cite{vaswani2017attention}, but its computational intensity remains a bottleneck. Recent advancements in Spiking Transformers have demonstrated attractive performance combined with reduced energy consumption. Spikformer \cite{zhouspikformer} pioneered the spike-based self-attention mechanism. Following its success, numerous studies have sought to enhance Spiking Transformers by adopting advanced Transformer architectures \cite{yao2023spike,yaospike} or designing spike coding mechanisms \cite{hao2024lm} or multi-spike neurons \cite{you2024spikezip,yao2025scaling} to minimize representation error. Max-Former \cite{fang2025spiking} posits that high-frequency information loss degrades Spiking Transformer performance and proposes a MetaFormer-style architecture to resolve it. In this work, we conduct frequency-domain analyses at both the neuronal and architectural levels, proposing a plug-and-play module to mitigate the resulting spectral bias. 

% \begin{table}[htbp]
%     \centering
%     \caption{Unified forms of Spiking Self Attention from a GSP perspective.}
%     \label{tab:unified spiking self attention representation}
%     % 1. 缩小字体大小
%     \footnotesize  
%     % 2. 极度压缩列间距 (默认是6pt，这里改为1.5pt)
%     \setlength{\tabcolsep}{6pt} 
%     % 定义表格：
%     % @{} 去掉表格最左和最右的留白
%     % l: 第一列左对齐，宽度自适应内容（最紧凑）
%     % c: 第二列居中，宽度自适应内容
%     % X: 第三列自动填充剩余所有空间
%     \begin{tabularx}{\linewidth}{@{} l c c @{}}
%         \toprule
%         \textbf{Method} & \textbf{Type} & \textbf{Unified Form} \\
%         \midrule
%         % 第一行: SSA
%         SSA & Attention & \(\mathcal{SN}(\mathrm{BN}(\mathrm{Lin}(\mathbf{QK}^\top\mathbf{V}\times s)))\) \\
%         \cite{zhouspikformer} & \(\mathcal{A}\) & \(\mathbf{QK}^\top\) \\
%         \midrule
%         % 第二行: SDSA
%         SDSA & Attention & \(\mathcal{SN}(\sum_c(\mathbf{Q}_S \otimes \mathbf{K}_S))\otimes \mathbf{V}_S\) \\
%          \cite{yao2023spike} & \(\mathcal{A}\) & \(\mathrm{diag}(\mathcal{M}(\mathbf{Q},\mathbf{K}))\) \\
%         \midrule
%         % 第三行: Meta-SDSA
%         Meta-SDSA & Attention & \(\mathcal{SN}_s(\mathbf{Q}_S(\mathbf{K}_S^\top \mathbf{V}_S))\) \\
%         \cite{yaospike} & \(\mathcal{A}\) & \(\mathbf{QK}^\top\) \\
%         \bottomrule
%     \end{tabularx}
% \end{table}
\section{Problem Analysis}
\label{main: theoretical analysis}

This section analyzes the spectral bias of Spiking Transformers from neuronal and architectural perspectives. Section \ref{theoretical analysis: analysis of spiking neuron} shows that LIF charging dynamics inherently exhibit low-pass filtering, while Section \ref{theoretical analysis: analysis of SSA} analyzes representative softmax-free spiking self-attention mechanisms through graph signal processing and reveals their suppression of high-frequency components.

\subsection{Analysis of Spiking Neuron}
\label{theoretical analysis: analysis of spiking neuron}

To analyze the frequency-domain characteristics of the LIF neuron, we first extract its charging process. Based on Equ.\ref{eq: lif1} to Equ.\ref{eq: lif3}, this can be formulated as:
\begin{equation}
    V[t] = \beta V[t-1] + (1 - \beta)X[t],
\end{equation}
where \(\beta = 1 - \frac{1}{\tau} \in (0, 1)\) represents the decay factor, and \(\tau\) is the time constant. \cref{thm: fourier analysis of spiking neuron} provides the frequency-domain characteristics at both the single-neuron and network levels.

\begin{theorem}
\label{thm: fourier analysis of spiking neuron}
The charging process of a single LIF neuron can be equivalently modeled as a first-order Infinite Impulse Response (IIR) low-pass filter, where its squared amplitude-frequency response decreases strictly monotonically with the digital angular frequency \(\omega \in [0, \pi]\). For an \(L\)-layer SNN, assuming the synaptic weight frequency response and the equivalent spike gain of the \(i\)-th layer are denoted by \(W_i(e^{j\omega})\) and \(k_i\) respectively, the overall end-to-end frequency response \(H_L(e^{j\omega})\) is exponentially modulated by the low-pass factor, specifically formulated as (proof in Supp. \ref{appendix: proof of theorem 3.1}):
\begin{equation}
    H_L(e^{j\omega}) = \left(\prod_{i=1}^L k_i W_i(e^{j\omega})\right) \left(\frac{1 - \beta}{1 - \beta e^{-j\omega}}\right)^L.
\end{equation}
\end{theorem}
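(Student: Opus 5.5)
The plan is to treat the charging recursion $V[t]=\beta V[t-1]+(1-\beta)X[t]$ as a linear time-invariant system, ignoring the reset and threshold in this sub-threshold regime, and to analyze it with the $z$-transform.

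\textbf{Single neuron.} With zero initial condition, transforming both sides gives $V(z)=\beta z^{-1}V(z)+(1-\beta)X(z)$. Hence the transfer function is
\begin{equation*}
G(z)=\frac{1-\beta}{1-\beta z^{-1}},
\end{equation*}
a first-order IIR filter with a single pole at $z=\beta$. Since $\beta\in(0,1)$, the pole lies inside the unit circle, so the filter is stable and its frequency response is obtained by setting $z=e^{j\omega}$. For the squared magnitude I expand the denominator $|1-\beta e^{-j\omega}|^2$, which gives
\begin{equation*}
|G(e^{j\omega})|^2=\frac{(1-\beta)^2}{1-2\beta\cos\omega+\beta^2}.
\end{equation*}
On $[0,\pi]$ the function $\cos\omega$ is strictly decreasing and $\beta>0$. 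So the denominator is strictly increasing and the squared response is strictly decreasing. It runs from $1$ at $\omega=0$ to $\big((1-\beta)/(1+\beta)\big)^2$ at $\omega=\pi$, which is the low-pass claim. As a check, the derivative is $-2\beta(1-\beta)^2\sin\omega/(\cdot)^2<0$ on $(0,\pi)$.

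\textbf{$L$-layer network.} I will model layer $i$ as a cascade of three blocks:
\begin{itemize}
\item a linear synaptic operator with frequency response $W_i(e^{j\omega})$;
\item the LIF charging filter $G$;
\item the spike generation, linearized as a static gain $k_i$.
\end{itemize}
The linearization of the spike generation is the equivalent-gain, describing-function style assumption implicit in the statement. Each block is LTI, so layer $i$ has response $k_iW_i(e^{j\omega})G(e^{j\omega})$. By the convolution theorem, responses of cascaded LTI systems multiply. Induction on $L$ then gives
\begin{equation*}
H_L=\prod_i k_iW_i\,G^L,
\end{equation*}
and scalars and transfer functions commute, so the factors can be regrouped into the stated form. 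Consequently $|H_L|^2=\prod_i k_i^2|W_i|^2\,|G|^{2L}$. The factor $|G|^{2L}$ is strictly decreasing, and the attenuation at $\omega=\pi$ relative to $\omega=0$ equals $\big((1-\beta)/(1+\beta)\big)^{2L}$. This is the exponential modulation in $L$.

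\textbf{Main obstacle.} The difficult step is justifying the linearization: the threshold and reset make the true neuron nonlinear and time-varying. I will not try to prove the nonlinear system is exactly LTI. Instead I will state explicitly that the reset is neglected, since charging is analyzed between spikes, and that $\Theta$ is replaced by its equivalent gain $k_i$, for example the average firing slope. The result then holds exactly for this equivalent linear model, which is the sense of ``equivalently modeled'' in the statement.
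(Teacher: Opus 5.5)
Your proposal is correct and follows essentially the same route as the paper: isolate the sub-threshold recursion, obtain the first-order IIR response $(1-\beta)/(1-\beta e^{-j\omega})$, show that $(1-\beta)^2/(1-2\beta\cos\omega+\beta^2)$ strictly decreases on $[0,\pi]$ because $\cos\omega$ does, then linearize spike emission as a gain $k_i$ and multiply the cascaded layer responses. The differences are cosmetic. You use the $z$-transform and note that the pole at $z=\beta$ lies inside the unit circle, which justifies evaluating on $|z|=1$; the paper's direct DTFT step leaves this implicit. The paper also sketches a heuristic treatment of the reset term, which the statement does not require.
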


\subsection{Analysis of Softmax-free Spiking Self-Attention}
\label{theoretical analysis: analysis of SSA}

\textbf{Effect of softmax Operation.} We analyze the effect of the softmax operation from the perspective of Graph Signal Processing (GSP). Within the VSA, we view the token sequence as graph nodes, the attention matrix as the adjacency matrix \(P\), and the feature vectors as signals defined on the graph. Let \(P\) define the connection strengths between nodes. We introduce Dirichlet Energy \cite{zhou2021dirichlet} to quantify the frequency level of signals on the graph.

\begin{definition}
\label{def: laplacian quadratic form}
For a graph signal \(\mathbf{x} \in \mathbb{R}^N\), the Laplacian Quadratic Form based on the normalized adjacency matrix \(P\) is defined as \(E(\mathbf{x})=\mathbf{x}^{T}\mathbf{L}\mathbf{x}=\sum_{i,j}P_{ij}(x_i-x_j)^2\), where \(\mathbf{L}=\mathbf{I}-\mathbf{P}\).
\end{definition}

We utilize \(E(\mathbf{x})\) to characterize signals: \textbf{\textit{a. Low-frequency Signal}}: A small \(E(\mathbf{x})\) implies that signal values \(x_i\) and \(x_j\) at adjacent nodes are similar, indicating global smoothness. \textbf{\textit{b. High-frequency Signal}}: A large \(E(\mathbf{x})\) indicates significant discrepancies between adjacent nodes, corresponding to local sharpness. Let the attention matrix for VSA be defined as \(\mathbf{P}_{\text{soft}}=\text{softmax}(\frac{\mathbf{QK}^T}{\sqrt {d_k}})\), and for softmax-free attention as \(\mathbf{P}_{\text{soft-free}}=\frac{\mathbf{QK}^T}{N}\) (where \(1/N\) serves as the normalization coefficient). We establish the following theorem:

\begin{lemma}
\label{lem: soft-free}
If the feature dimension \(d_k < N\), the Dirichlet Energy of the output signal \(\mathbf{y}_{\text{soft-free}} = \mathbf{P}_{\text{soft-free}} \mathbf{x}\) is constrained by the null space of \(\mathbf{P}_{\text{soft-free}}\), resulting in the truncation of high-frequency components (proof in Supp. \ref{appendix: proof of lemma 3.3}).
\end{lemma}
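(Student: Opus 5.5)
The plan is to make the informal statement precise through a rank argument, followed by a spectral decomposition of the Laplacian quadratic form from Definition~\ref{def: laplacian quadratic form}.

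\textbf{Rank and null space.} Since $\mathbf{Q},\mathbf{K}\in\mathbb{R}^{N\times d_k}$, we have $\mathrm{rank}(\mathbf{P}_{\text{soft-free}})=\mathrm{rank}(\mathbf{Q}\mathbf{K}^T)\le d_k<N$. By rank--nullity, $\dim\ker(\mathbf{P}_{\text{soft-free}})\ge N-d_k>0$. Every component of $\mathbf{x}$ lying in $\ker(\mathbf{K}^T)\subseteq\ker(\mathbf{P}_{\text{soft-free}})$ is therefore annihilated. The output $\mathbf{y}_{\text{soft-free}}$ is confined to the column space $\mathrm{col}(\mathbf{Q})$, a subspace of dimension at most $d_k$. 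This part is routine linear algebra.

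\textbf{Fixing the reference Laplacian.} Next I would fix the graph against which frequency is measured. I would use a symmetric normalized reference (for instance, the symmetrized $\mathbf{P}$, or a fixed token graph) with Laplacian eigenpairs $\mathbf{L}=\sum_k\lambda_k\mathbf{u}_k\mathbf{u}_k^T$, where $0=\lambda_1\le\dots\le\lambda_N$. Writing $\mathbf{y}=\sum_k\hat y_k\mathbf{u}_k$ gives
\[
E(\mathbf{y})=\sum_k\lambda_k\hat y_k^2 .
\]
Since $\mathbf{y}\in\mathrm{col}(\mathbf{Q})$, the Courant--Fischer theorem bounds the Rayleigh quotient:
\[
\frac{E(\mathbf{y})}{\|\mathbf{y}\|^2}\le\max_{\mathbf{z}\in\mathrm{col}(\mathbf{Q})}\frac{\mathbf{z}^T\mathbf{L}\mathbf{z}}{\mathbf{z}^T\mathbf{z}} .
\]
At most $d_k$ spectral directions are reachable. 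The remaining $N-d_k$ directions, in particular the high-frequency eigenvectors not represented in $\mathrm{col}(\mathbf{Q})$, are truncated.

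\textbf{Contrast with softmax.} To show that this truncation is specific to the softmax-free case, I would note that $\mathbf{P}_{\text{soft}}$ is entrywise positive. Generically it has full rank, because the elementwise exponential of a rank-$d_k$ matrix is generically of full rank. Hence no subspace of input signals is forced to zero, and the whole spectrum remains reachable.

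\textbf{Main obstacle.} The hard step is justifying that the annihilated directions are genuinely high-frequency rather than arbitrary. Rank deficiency alone only says that $N-d_k$ directions are lost, not which ones. My first approach would be to argue that $\mathbf{Q},\mathbf{K}$ are produced from spike inputs by low-pass LIF dynamics (Theorem~\ref{thm: fourier analysis of spiking neuron}). Their columns are then dominated by smooth, low-$\lambda$ components, so $\mathrm{col}(\mathbf{Q})$ lies approximately in the span of $\mathbf{u}_1,\dots,\mathbf{u}_{d_k}$. This yields the bound
\[
E(\mathbf{y})\lesssim\lambda_{d_k}\|\mathbf{y}\|^2\le\lambda_N\|\mathbf{y}\|^2 .
\]
Failing that, I would state the result in its defensible form: the energy is bounded by the maximal Rayleigh quotient over a $d_k$-dimensional subspace, and components orthogonal to that subspace are lost.
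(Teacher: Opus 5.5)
Your core argument is the paper's proof: \(\mathrm{rank}(\mathbf{P}_{\text{soft-free}})\le d_k\), a null space of dimension at least \(N-d_k\) that is annihilated, and an output confined to a subspace of dimension at most \(d_k\). The paper does not resolve the step you flag as the main obstacle. It splits \(\mathbf{x}=\mathbf{x}_{\parallel}+\mathbf{x}_{\perp}\) into row space plus null space, then simply asserts that in natural signals high-frequency detail is orthogonal to the principal components. From that assertion it concludes \(E(\mathbf{y})\ll E(\mathbf{x})\). Your fallback formulation is therefore no weaker than what the paper actually proves. Normalizing by \(\|\mathbf{y}\|^2\) is also more defensible than the paper's unnormalized comparison, since \(\mathbf{P}_{\text{soft-free}}\) can rescale the components that survive.

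Two of your intermediate steps would not go through as written.

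\emph{The Courant--Fischer step.} Your displayed bound is trivial, because \(\mathbf{y}\) itself lies in \(\mathrm{col}(\mathbf{Q})\). Courant--Fischer in fact points the other way: over any \(d_k\)-dimensional subspace the maximal Rayleigh quotient is at least \(\lambda_{d_k}\). It can equal \(\lambda_N\) whenever \(\mathbf{u}_N\in\mathrm{col}(\mathbf{Q})\).

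\emph{The LIF-based closure.} This step conflates two notions of frequency. Theorem~\ref{thm: fourier analysis of spiking neuron} is a DTFT statement along the time-step axis. The Dirichlet energy measures variation across tokens on the graph. Temporal low-pass filtering says nothing about whether the columns of \(\mathbf{Q}\) are smooth over tokens.

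\emph{A clean closure.} Use the Laplacian that Definition~\ref{def: laplacian quadratic form} actually prescribes, \(\mathbf{L}=\mathbf{I}-\mathbf{P}_{\text{soft-free}}\), working with the quadratic form \(\mathbf{x}^T\mathbf{L}\mathbf{x}\). Add the assumption \(\mathbf{Q}\approx\mathbf{K}\), which the paper itself invokes in the proof of Theorem~\ref{thm: softmax-free spiking self attention}. Then:
\begin{itemize}
\item \(\mathbf{P}_{\text{soft-free}}=\sum_i\mu_i\mathbf{u}_i\mathbf{u}_i^T\) is symmetric positive semidefinite, so \(\mathbf{L}\) has eigenvalues \(1-\mu_i\le 1\).
\item \(\ker\mathbf{P}_{\text{soft-free}}\) is exactly the eigenspace of \(\mathbf{L}\) at \(\lambda=1\), which is the top of its spectrum and is nontrivial since \(d_k<N\).
\item Since \(\mathbf{y}\in\mathrm{span}\{\mathbf{u}_i:\mu_i>0\}\), you get \(\mathbf{y}^T\mathbf{L}\mathbf{y}\le(1-\mu_{\min}^{+})\|\mathbf{y}\|^2<\lambda_{\max}(\mathbf{L})\|\mathbf{y}\|^2\), where \(\mu_{\min}^{+}\) is the smallest positive eigenvalue.
\end{itemize}
So the annihilated directions are provably the highest-frequency ones, with no appeal to natural-signal statistics.
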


\begin{theorem}
\label{thm: soft}
The softmax operation employs an element-wise exponential transformation to map the rank-\(d_k\) linear matrix to a full-rank matrix, thereby maximizing the preservation of Dirichlet Energy (proof in Supp. \ref{appendix: proof of theorem 3.4}).
\end{theorem}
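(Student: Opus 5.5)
The argument compares the two attention matrices through their ranks and null spaces, reusing the mechanism of \cref{lem: soft-free}. That lemma shows that $\mathbf{P}_{\text{soft-free}}=\mathbf{QK}^T/N$ has rank at most $d_k<N$. Its output $\mathbf{y}_{\text{soft-free}}$ therefore lies in the column space of $\mathbf{Q}$, a subspace of dimension at most $d_k$. The complementary $(N-d_k)$-dimensional null space is annihilated, and with it any high-frequency content of $\mathbf{x}$ lying there.

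The first step is to show that the element-wise exponential lifts the rank. Write $\mathbf{S}=\mathbf{QK}^T/\sqrt{d_k}$ and $\exp(\mathbf{S})_{ij}=\exp(\mathbf{q}_i^T\mathbf{k}_j/\sqrt{d_k})$. Expanding the exponential entrywise gives
\[
\exp(\mathbf{S})=\sum_{m\ge 0}\frac{1}{m!}\,\mathbf{S}^{\circ m},
\]
where $\mathbf{S}^{\circ m}$ is the $m$-th Hadamard power. Each $\mathbf{S}^{\circ m}$ is a Gram-type matrix of the tensor powers $\mathbf{q}_i^{\otimes m}$ and $\mathbf{k}_j^{\otimes m}$. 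Equivalently, $\exp(\mathbf{S})$ is the cross-Gram matrix of the exponential kernel, which is a strictly positive definite (universal) kernel. So for pairwise distinct query vectors and pairwise distinct key vectors, the feature maps are linearly independent in the infinite-dimensional feature space. I would argue from this that $\exp(\mathbf{S})$ is nonsingular for generic $\mathbf{Q},\mathbf{K}$.

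The cleanest route is through genericity. $\det\exp(\mathbf{S})$ is a real-analytic function of the entries of $(\mathbf{Q},\mathbf{K})$. It is not identically zero: take $\mathbf{Q}=\mathbf{K}$ with distinct rows, where the matrix becomes a positive definite Gaussian-type kernel matrix after diagonal rescaling. Hence $\det\exp(\mathbf{S})$ vanishes only on a measure-zero set.

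Row normalization then preserves full rank. $\mathbf{P}_{\text{soft}}=\mathbf{D}^{-1}\exp(\mathbf{S})$ with $\mathbf{D}$ a positive diagonal matrix, so $\ker\mathbf{P}_{\text{soft}}=\{0\}$.

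The final step reads this as preservation of Dirichlet energy. Since $\mathbf{P}_{\text{soft}}$ is invertible, no nonzero input produces zero output. In particular, no high-frequency eigencomponent of $\mathbf{L}$ is mapped to zero, unlike the truncation established in \cref{lem: soft-free}. I would formalize ``maximizing preservation'' as follows. For the soft-free map, the set of inputs whose output has zero Dirichlet energy contains the $(N-d_k)$-dimensional null space. For the softmax map it is only the preimage of constant vectors, which has dimension $1$; this is the minimum possible, since row-stochasticity forces $\mathbf{P}_{\text{soft}}\mathbf{1}=\mathbf{1}$. Equivalently, $E(\mathbf{P}\mathbf{x})$ can be bounded below by $\sigma_{\min}(\mathbf{P})^2$ times the energy of a suitable component of $\mathbf{x}$, and this bound is positive only when $\mathbf{P}$ is full rank.

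The main obstacle is making ``maximizing'' precise, since the claim as stated is qualitative. I expect to settle for the rank/kernel-dimension comparison, namely full rank versus rank $\le d_k$, plus the generic nonsingularity argument. I would not attempt a sharp optimality statement over all attention maps.
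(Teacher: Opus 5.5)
Your proposal is correct, and its skeleton matches the paper's. You expand $\exp(\mathbf{S})$ into Hadamard powers, conclude that $\exp(\mathbf{S})$ is nonsingular, and note that $\mathbf{D}^{-1}$ preserves rank, so $\mathcal{N}(\mathbf{P}_{\text{soft}})=\{\mathbf{0}\}$. You then contrast this with the null space of dimension at least $N-d_k$ from Lemma~\ref{lem: soft-free}. The difference lies in how the two load-bearing steps are justified.

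For rank restoration, the paper invokes the Schur product theorem and $\mathrm{rank}(\mathbf{A}\circ\mathbf{B})\le\mathrm{rank}(\mathbf{A})\,\mathrm{rank}(\mathbf{B})$. It then asserts that the Hadamard powers are ``extremely likely'' to be linearly independent, so $\exp(\mathbf{S})$ is full rank almost everywhere. This is heuristic. The Schur product theorem only applies to positive semidefinite $\mathbf{S}$, which $\mathbf{QK}^\top$ generally is not, and even then it yields semidefiniteness, not rank. Your argument actually proves the almost-everywhere claim with no symmetry assumption: $\det\exp(\mathbf{S})$ is real-analytic in $(\mathbf{Q},\mathbf{K})$, and it is nonzero at $\mathbf{Q}=\mathbf{K}$ with distinct rows, where the matrix is a positive diagonal rescaling of a Gaussian kernel matrix. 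It also correctly stays at ``generic'', since repeated query rows make $\exp(\mathbf{S})$ singular. Your intermediate remark that the query and key feature maps are each linearly independent would not by itself give nonsingularity of the \emph{cross}-Gram matrix, but you do not rely on it.

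For the energy step, the paper infers $E(\mathbf{P}_{\text{soft}}\mathbf{x}_\perp)>0$ from $\mathbf{P}_{\text{soft}}\mathbf{x}_\perp\neq\mathbf{0}$. This overlooks that nonzero constant outputs carry zero Dirichlet energy. Because $\mathbf{P}_{\text{soft}}^{-1}\mathbf{1}=\mathbf{1}$, the inference fails exactly when $\mathbf{1}\in\mathcal{N}(\mathbf{P}_{\text{soft-free}})$. Your count handles this correctly: the zero-energy preimage under $\mathbf{P}_{\text{soft}}$ is $\mathrm{span}\{\mathbf{1}\}$, which is minimal among row-stochastic maps. This needs the Laplacian's kernel to be exactly $\mathrm{span}\{\mathbf{1}\}$, i.e.\ a connected underlying graph, and you should state that assumption explicitly.

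In exchange, the paper's Hadamard-power picture offers a qualitative mechanism: each higher-order term can contribute new directions. Your genericity argument, by contrast, is purely existential.
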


Based on Lemma \ref{lem: soft-free}, we can prove that \cref{thm: soft} holds. Consequently, this demonstrates that the softmax operation can, to a certain extent, mitigate the loss of high-frequency information.

% \begin{table}[htbp]
%     \centering
%     \caption{Unified forms of Spiking Self Attention from a GSP perspective.}
%     \label{tab:unified spiking self attention representation}

%     % 1. 缩小字体大小
%     \footnotesize  
%     % 2. 极度压缩列间距 (默认是6pt，这里改为1.5pt)
%     \setlength{\tabcolsep}{6pt} 

%     % 定义表格：
%     % @{} 去掉表格最左和最右的留白
%     % l: 第一列左对齐，宽度自适应内容（最紧凑）
%     % c: 第二列居中，宽度自适应内容
%     % X: 第三列自动填充剩余所有空间
%     \begin{tabularx}{\linewidth}{@{} l c c @{}}
%         \toprule
%         \textbf{Method} & \textbf{Type} & \textbf{Unified Form} \\
%         \midrule
%         % 第一行: SSA
%         SSA & Attention & $\mathcal{SN}(\mathrm{BN}(\mathrm{Lin}(\mathbf{QK}^\top\mathbf{V}\times s)))$ \\
%         \cite{zhouspikformer} & $\mathcal{A}$ & $\mathbf{QK}^\top$ \\
%         \midrule
%         % 第二行: SDSA
%         SDSA & Attention & $\mathcal{SN}(\sum_c(\mathbf{Q}_S \otimes \mathbf{K}_S))\otimes \mathbf{V}_S$ \\
%          \cite{yao2023spike} & $\mathcal{A}$ & $\mathrm{diag}(\mathcal{M}(\mathbf{Q},\mathbf{K}))$ \\
%         \midrule
%         % 第三行: Meta-SDSA
%         Meta-SDSA & Attention & $\mathcal{SN}_s(\mathbf{Q}_S(\mathbf{K}_S^\top \mathbf{V}_S))$ \\
%         \cite{yaospike} & $\mathcal{A}$ & $\mathbf{QK}^\top$ \\
%         \bottomrule
%     \end{tabularx}
% \end{table}

\textbf{Softmax-free Spiking Self-Attention.} Proceeding from the perspective of GSP, we consider the \(N\) tokens in Spiking Transformer as nodes within a graph \(\mathcal{G}=(\mathcal{V}, \mathcal{E})\), treating the input features \(V \in \mathbb{R}^{N \times d}\) as graph signals defined on this domain. 

We focus on three variants of softmax-free spiking self attention that utilize LIF neuron as activation functions. They can be uniformly abstracted as the application of a Graph Shift Operator \(\mathcal{A}\) to the graph signal \(V\): \(\mathbf{O}=\mathcal{A} \cdot V\), where \(\mathcal{A} \in \mathbb{R}^{N \times N}\) denotes the Generalized Adjacency Matrix constructed from \(\mathbf{Q}\) and \(\mathbf{K}\). As illustrated in the Table \ref{tab:unified spiking self attention representation}, distinct attention mechanisms correspond to specific formulations of \(\mathcal{A}\). In the frequency domain, the above equation is equivalent to:
\begin{equation}
    \hat{\mathbf{O}}(\lambda)=\hat{\mathcal{A}}(\lambda) \odot \hat{V}(\lambda),
\end{equation}
where \(\hat{\mathcal{A}}(\lambda)\) denotes the frequency response function of the operator \(\mathcal{A}\).

\begin{theorem}
\label{thm: softmax-free spiking self attention}
SSA \cite{zhouspikformer}, SDSA \cite{yao2023spike}, and Meta-SDSA \cite{yaospike} behave as low-pass filters in the frequency domain (proof in Supp. \ref{appendix: proof of theorem 3.5}).
\end{theorem}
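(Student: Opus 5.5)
We argue through the graph-signal formulation \(\mathbf{O}=\mathcal{A}V\) and the Dirichlet energy of Definition~\ref{def: laplacian quadratic form}. A filter counts as low-pass when its frequency response \(\hat{\mathcal{A}}(\lambda)\) is concentrated on, or decays monotonically toward, the smooth end of the spectrum. Equivalently, the output's energy ratio \(E(\mathbf{o})/\|\mathbf{o}\|^2\) is pushed toward the bottom of the Laplacian spectrum relative to the input. The plan has three steps: write \(\mathcal{A}\) explicitly for each variant, exhibit the structural property that kills high frequencies, and invoke Lemma~\ref{lem: soft-free} together with Theorem~\ref{thm: fourier analysis of spiking neuron} for the spiking nonlinearities.

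\textbf{Step 1: the SSA and Meta-SDSA cases.} For both, \(\mathcal{A}=\mathbf{Q}_S\mathbf{K}_S^\top\) up to a positive scale. Here \(\mathbf{Q}_S,\mathbf{K}_S\in\{0,1\}^{N\times d}\) are spike matrices, so the associativity used in Meta-SDSA, \(\mathbf{Q}_S(\mathbf{K}_S^\top V)\), does not change the operator.
\begin{itemize}
\item \textbf{Rank.} \(\mathcal{A}\) has rank at most \(d<N\). By Lemma~\ref{lem: soft-free}, every output column lies in \(\mathrm{col}(\mathbf{Q}_S)\), so any spectral component orthogonal to it is annihilated.
\item \textbf{Perron alignment.} \(\mathcal{A}\) is entrywise nonnegative. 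After the normalization \(1/N\), its dominant eigenvector is the nonnegative Perron vector.
\item \textbf{Upper bound.} To make this quantitative, I would bound the Rayleigh quotient \(\mathbf{o}^\top\mathbf{L}\mathbf{o}/\mathbf{o}^\top\mathbf{o}\) for \(\mathbf{o}=\mathcal{A}\mathbf{v}\). Expanding \(\mathcal{A}=\sum_k \sigma_k \mathbf{u}_k\mathbf{w}_k^\top\) shows that the output is a combination dominated by the top singular direction.
\item \textbf{Smoothness of that direction.} I would show the top singular direction is close to the smooth (constant-like) eigenvector of \(\mathbf{L}\). This follows because nonnegative spike inner products make the row sums of \(\mathcal{A}\) the dominant quantity.
\end{itemize}
The conclusion is \(\hat{\mathcal{A}}(\lambda)\approx 0\) for large \(\lambda\), with the gain concentrated at \(\lambda\approx 0\).

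\textbf{Step 2: the SDSA case.} Here \(\mathcal{A}=\mathrm{diag}(\mathcal{SN}(\sum_c \mathbf{Q}_S\otimes\mathbf{K}_S))\) acts channelwise as a binary mask. The additional smoothing comes from the spiking neuron applied to the Hadamard-summed gate. By Theorem~\ref{thm: fourier analysis of spiking neuron}, the charging dynamics multiply the response by the factor \((1-\beta)/(1-\beta e^{-j\omega})\), whose magnitude is strictly decreasing in \(\omega\). I would also argue that thresholding the aggregated column sum yields a gate that is nearly constant across tokens, so the mask approximates a scalar multiple of the identity restricted to the active channels. Combining the two, the cascade of LIF filtering on \(V_S\) and on the gate gives a low-pass response \(\prod\) of factors \(|1-\beta|/|1-\beta e^{-j\omega}|\), consistent with the \(L\)-fold form in Theorem~\ref{thm: fourier analysis of spiking neuron}.

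\textbf{Main obstacle.} The hardest step is making ``low-pass'' rigorous on the graph side. \(\mathcal{A}\) is generally not symmetric, so it does not share an eigenbasis with \(\mathbf{L}\), and the response \(\hat{\mathcal{A}}(\lambda)\) is not literally defined. I would therefore work in the eigenbasis of the symmetrized Laplacian built from \(\mathcal{A}\) itself. I would first try to prove the inequality
\[
E(\mathcal{A}\mathbf{v})\le \rho\,\|\mathcal{A}\mathbf{v}\|^2,\qquad \rho<\lambda_{\max},
\]
using nonnegativity and the rank deficiency from Lemma~\ref{lem: soft-free}. I would contrast this with Theorem~\ref{thm: soft}, where the full rank supplied by softmax is what prevents the truncation. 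If a uniform bound fails, I would settle for an expected-case statement under i.i.d.\ Bernoulli spikes. In that model \(\mathbb{E}[\mathcal{A}]\propto \mathbf{1}\mathbf{1}^\top+\mathrm{diag}\), which is exactly a projection onto the zero-frequency mode plus a small diagonal term.
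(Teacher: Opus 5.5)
\textbf{Gap 1: the SDSA step fails, and your own observation is what breaks it.} In SDSA the gate $\mathcal{SN}(\mathrm{SUM}_c(\mathbf{Q}_S\otimes\mathbf{K}_S))$ is a $D$-dimensional vector broadcast over all tokens. It is therefore exactly constant across tokens, not just nearly so. On the token graph, channel $c$ of $V_S$ is multiplied by a scalar $g_c\in\{0,1\}$, so $\mathcal{A}=g_cI_N$ and $\hat{\mathcal{A}}(\lambda)=g_c$ for every $\lambda$. That is an all-pass (or zero) response, not a low-pass one.

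Appealing to Theorem 3.1 changes the frequency variable. The factor $(1-\beta)/(1-\beta e^{-j\omega})$ concerns temporal frequency across time steps, whereas the claim is posed in the graph domain, $\hat{\mathbf{O}}(\lambda)=\hat{\mathcal{A}}(\lambda)\odot\hat V(\lambda)$. That factor also appears in every LIF-based block, SCA included, so it cannot single out SDSA's attention operator.

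Even temporally, your ``cascade'' is not a cascade. The gate multiplies $V_S$ pointwise in time. Under the DTFT that is a convolution of spectra, not a product of transfer functions, so the $L$-fold product of Theorem 3.1 (a serial chain of filters) does not apply.

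The paper argues differently. It models the mask as a token-wise diagonal $\mathcal{A}=\mathrm{diag}(m)$ with $m\in\mathbb{R}^N$. It then argues heuristically that a sparse gate, open only where $\mathbf{Q}$ matches $\mathbf{K}$, suppresses spatially random perturbations: $\mathrm{diag}(m)\,\xi_{\text{noise}}\approx 0$. Under your channel-wise reading, which matches the stated SDSA definition, no graph-frequency low-pass conclusion is available. Step 2 as planned therefore cannot deliver the claim.

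\textbf{Gap 2: the key claim in Step 1 is unsupported.} You need the top singular direction of $\mathbf{Q}_S\mathbf{K}_S^\top$ to be close to the constant vector because row sums dominate. Perron--Frobenius gives a nonnegative vector, not a smooth one. The top left singular vector has the form $\mathbf{Q}_S g$ with $g\ge 0$. It therefore vanishes on every token whose query is silent, and under sparse, spatially localized firing it is far from constant. The row-sum vector $\mathbf{Q}_S(\mathbf{K}_S^\top\mathbf{1})$ likewise tracks per-token spike counts rather than being uniform.

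Rank deficiency (Lemma 3.3) does not close this gap, since a rank-$d$ operator can just as well retain high-frequency modes. A bound $E(\mathcal{A}\mathbf{v})\le\rho\|\mathcal{A}\mathbf{v}\|^2$ with only $\rho<\lambda_{\max}$ holds for a generic $d$-dimensional range, so it carries no low-pass content.

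The ingredient the paper uses, which you list only as the obstacle, is the assumption that $\mathbf{Q}$ and $\mathbf{K}$ are approximately identically distributed. Then $\mathcal{A}\approx\mathbf{Q}\mathbf{Q}^\top$ is symmetric PSD, with $\mathcal{A}=\sum_i\mu_iu_iu_i^\top$ and $\mu_i=0$ for $i>d$. The paper takes the eigenvectors of $\mathcal{A}$ itself, in decreasing order of $\mu_i$, as the graph Fourier basis in increasing order of frequency. This gives the response $\hat h(\lambda_i)=\mu_i$ for $i\le d$ and $0$ beyond. Your ``symmetrized Laplacian built from $\mathcal{A}$'' is the right frame, but $\mathcal{A}$ is diagonal in that basis only under this symmetry assumption. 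Once you adopt it, the Perron and row-sum steps are unnecessary. Your Bernoulli fallback, by contrast, speaks only about $\mathbb{E}[\mathcal{A}]$ and leaves the realized operator uncontrolled.
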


\begin{table}[t]
    \centering
    \caption{Unified forms of SSA from a GSP perspective.}
    \vspace{-3mm}
    \label{tab:unified spiking self attention representation}
    \footnotesize
    \setlength{\tabcolsep}{3pt}
    \resizebox{\columnwidth}{!}{%
        \begin{tabular}{@{} l c l @{}}
            \toprule
            \textbf{Method} & \textbf{Type} & \textbf{Unified Form} \\
            \midrule
            \multirow{2}{*}{SSA \cite{zhouspikformer}}
            & Attn
            & \(\mathcal{SN}(\mathrm{BN}(\mathrm{Lin}(\mathcal{SN}(\mathbf{QK}^\top\mathbf{V}*s))))\) \\
            & \(\mathcal{A}\) & \(\mathbf{QK}^\top\) \\
            \midrule
            \multirow{2}{*}{SDSA \cite{yao2023spike}}
            & Attn
            & \(\mathcal{SN}(\sum_c(\mathbf{Q}_S \otimes \mathbf{K}_S))\otimes \mathbf{V}_S\) \\
            & \(\mathcal{A}\)
            & \(\mathrm{diag}(\mathcal{M}(\mathbf{Q},\mathbf{K}))\) \\
            \midrule
            \multirow{2}{*}{Meta-SDSA \cite{yaospike}}
            & Attn
            & \(\mathcal{SN}_s(\mathbf{Q}_S(\mathbf{K}_S^\top\mathbf{V}_S))\) \\
            & \(\mathcal{A}\) & \(\mathbf{QK}^\top\) \\
            \bottomrule
        \end{tabular}%
    }
    \vspace{-4mm}
\end{table}

\cref{thm: softmax-free spiking self attention} demonstrates that the above spiking self attention behave as low-pass filters. Specifically, \textit{SDSA} employs an element-wise Hadamard product. In the GSP perspective, this corresponds to a diagonal adjacency matrix \(\mathcal{A} = \text{diag}(\mathcal{M}(\mathbf{Q},\mathbf{K}))\). This indicates that \textit{SDSA} treats the token sequence as a set of isolated nodes with self-loops, thereby avoiding the transmission of high-frequency signals between nodes. For \textit{SSA} and \textit{Meta-SDSA}, their operaitions are mathematically equivalent to applying a factorized low-rank adjacency matrix \(\mathcal{A}=\mathbf{Q}\mathbf{K}^T\) to the value signal \(V\). Thus, they share similar low-pass-like characteristics.

\begin{figure*}[t]
    \centering    
    \includegraphics[width=\linewidth]{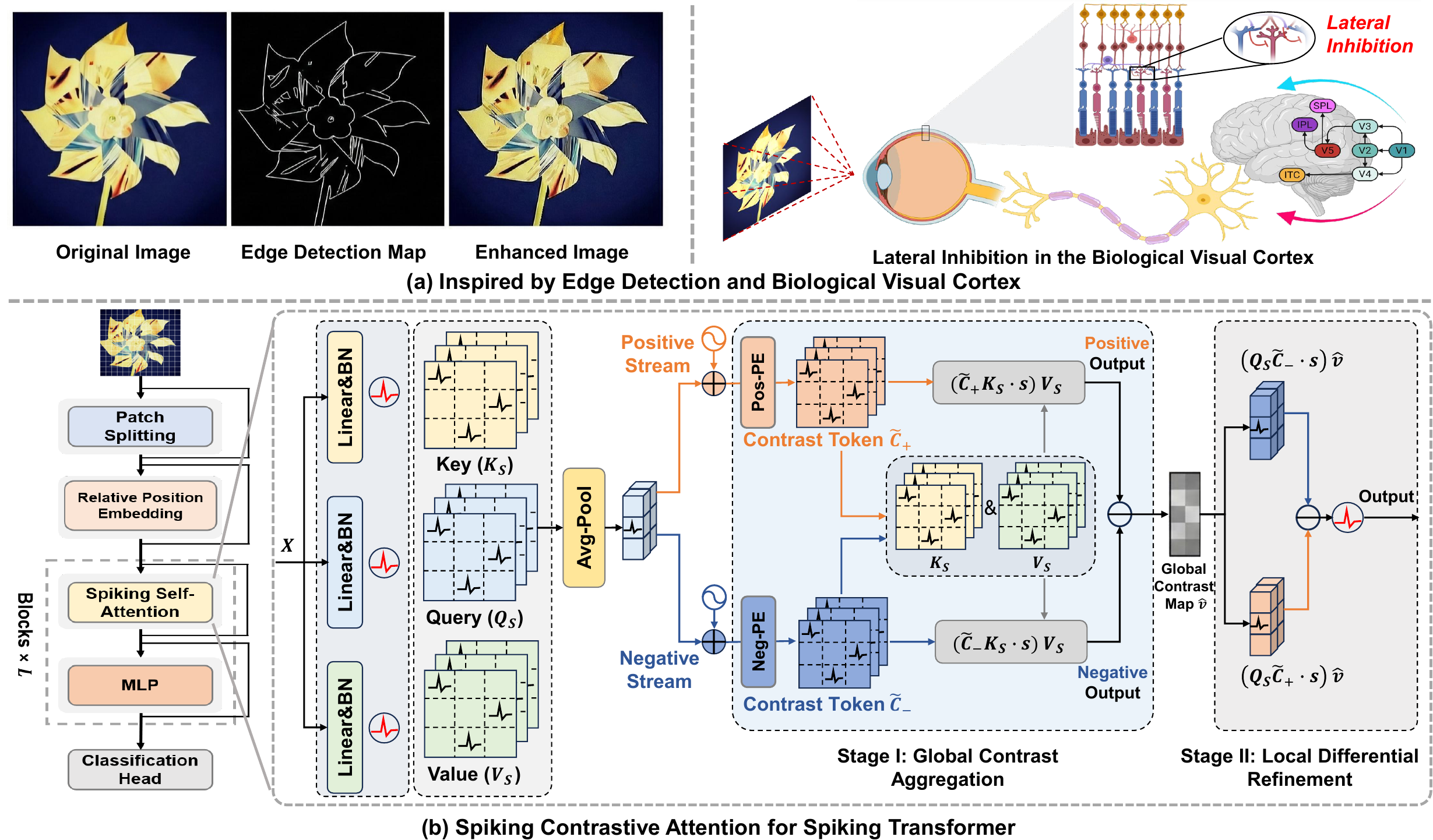}
    \vspace{-5mm}
    \caption{(a) Edge Detection and Biological Visual Cortex. (b) Spiking Contrastive Attention for Spiking Transformer.}
    \vspace{-5mm}
    \label{fig:main figure}
\end{figure*}

\section{Methodology}

This section presents the proposed Spiking Contrastive Attention (SCA) mechanism for mitigating the spectral bias identified above. Section \ref{method: differential} motivates differential processing as a biologically inspired approach to preserving high-frequency information in Spiking Transformers. Section \ref{method: spiking contrastive attention mechanism} describes the two-stage SCA mechanism, which performs global contrast aggregation followed by local differential refinement. Section \ref{method: complexity analysis} analyzes its computational complexity and compares it with vanilla spiking self-attention, demonstrating the efficiency of the proposed prototype-based interactions.

\subsection{Differential for Mitigating High-frequency Losses}
\label{method: differential}

Based on the above analysis, the spectral bias in spiking self-attention stems from the low-pass characteristics of LIF neurons combined with the absence of the sharpening effect induced by softmax. However, the softmax function, which involves exponentiation and division, violates the hardware-efficient computational principles of SNNs \cite{zhou2024spiking}. Consequently, we investigate methodologies to mitigate spectral bias without resorting to softmax.

In biological visual system, the lateral inhibition enables the brain to prioritize salient regions \cite{del2025lateral}. Specifically, the activation of neurons within a focal region induces the suppression of neighboring neurons (Fig. \ref{fig:main figure}(b)), thereby refining visual processing through the inhibition of irrelevant stimuli and the selective enhancement of prominent features \cite{magnuson2024contra,shaw1975retinal}. Therefore, lateral inhibition can be interpreted as a biological differential operator that selectively enhances high-frequency visual information \cite{huang2016impact,cook1998lateral,marr1980theory}. In traditional computer vision, edge detection algorithms (e.g., Sobel \cite{vincent2009descriptive,gao2010improved} or Laplacian \cite{wang2007laplacian} operators) enhance high-frequency image responses by computing the difference between adjacent pixels (as shown in Fig. \ref{fig:main figure} (a)). Furthermore, prior research has demonstrated that differential mechanism is effective in eliminating redundant information \cite{ye2024differential,pu2025linear}. Consequently, we propose incorporating a differential mechanism into spiking self-attention to enhance high-frequency information and amplify attention towards relevant contexts.

\begin{theorem}
\label{thm: differential}
Introducing a differential mechanism into attention enhances high-frequency information. Furthermore, the differential mechanism computes attention scores as the difference between two separate attention maps, thereby amplifying attention to relevant context (proof in Supp. \ref{appendix: proof of theorem 4.1}).
\end{theorem}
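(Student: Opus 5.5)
To make the statement precise, I will work in the same graph-signal-processing framework used for \cref{thm: softmax-free spiking self attention}. I would model a differential attention operator as \(\mathcal{A}_{\mathrm{diff}} = \mathcal{A}_1 - \lambda \mathcal{A}_2\) with \(\lambda \in (0,1]\), where \(\mathcal{A}_1\) and \(\mathcal{A}_2\) are two attention (generalized adjacency) matrices built from two query/key streams. I would take the two claims of \cref{thm: differential} separately: high-frequency enhancement is a statement about the spectral response \(\hat{\mathcal{A}}_{\mathrm{diff}}(\lambda)\), and relevance amplification is a statement about signal-to-noise of attention scores.

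\textbf{High-frequency enhancement.} First I will use the conclusion of \cref{thm: softmax-free spiking self attention}: each \(\mathcal{A}_i\) acts as a low-pass filter, with response \(\hat{\mathcal{A}}_i(\lambda)\) concentrated near \(\lambda=0\) and non-increasing in \(\lambda\). The canonical case takes \(\mathcal{A}_2\) smoother than \(\mathcal{A}_1\); for instance, \(\mathcal{A}_2\) is obtained from pooled (compressed) queries, as in Stage~I of SCA. Then \(\hat{\mathcal{A}}_2(\lambda)\) decays faster, so \(\hat{\mathcal{A}}_{\mathrm{diff}}(\lambda) = \hat{\mathcal{A}}_1(\lambda) - \lambda\hat{\mathcal{A}}_2(\lambda)\) is suppressed at \(\lambda=0\) and relatively retained at large \(\lambda\). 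This is a band/high-pass shape, analogous to a difference-of-Gaussians or Laplacian operator. The extreme special case makes this transparent: if \(\mathcal{A}_2 = \mathbf{P}\) is a row-normalized averaging operator and \(\mathcal{A}_1 = \mathbf{I}\), then \(\mathcal{A}_{\mathrm{diff}} = \mathbf{L}\) from Definition~\ref{def: laplacian quadratic form}, whose response \(\lambda\) is increasing. For the general case I will compare ratios rather than absolute values. I will show that \(\hat{\mathcal{A}}_{\mathrm{diff}}(\lambda_{\max}) / \hat{\mathcal{A}}_{\mathrm{diff}}(0) \ge \hat{\mathcal{A}}_1(\lambda_{\max}) / \hat{\mathcal{A}}_1(0)\) whenever \(\hat{\mathcal{A}}_2(\lambda)/\hat{\mathcal{A}}_1(\lambda)\) is non-increasing in \(\lambda\). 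I will then translate this into the Dirichlet energy statement: the ratio \(E(\mathcal{A}_{\mathrm{diff}}\mathbf{x}) / \|\mathcal{A}_{\mathrm{diff}}\mathbf{x}\|^2\) exceeds the corresponding ratio for \(\mathcal{A}_1\), since it is a weighted average of \(\lambda\) with weights shifted toward high frequencies. As a complementary point linking to Lemma~\ref{lem: soft-free}, I would also note that \(\mathrm{rank}(\mathcal{A}_1 - \lambda\mathcal{A}_2)\) can reach \(2d_k\), which shrinks the null space that truncates high-frequency components.

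\textbf{Relevance amplification.} Here I would decompose each attention map as \(\mathcal{A}_i = \mathbf{S} + \mathbf{N}_i\). The term \(\mathbf{S}\) is a shared common-mode component: diffuse attention to irrelevant context, modeled as approximately the same in both streams. The term \(\mathbf{N}_1\) carries the signal on relevant tokens. Subtracting cancels the common mode, giving \((1-\lambda)\mathbf{S} + \mathbf{N}_1 - \lambda\mathbf{N}_2\). I will then show that the ratio of relevant-token score to irrelevant-token mass strictly increases, which is the argument of differential amplifiers and of the Differential Transformer.

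\textbf{Main obstacle.} The hardest step is making the first part rigorous. The two attention matrices are data-dependent and generally not simultaneously diagonalizable, so "frequency response" is only well defined with respect to a fixed Laplacian basis. I would handle this by defining responses through Rayleigh quotients in the eigenbasis of \(\mathbf{L}\). I would state the result under an explicit assumption that \(\mathcal{A}_2\) is a smoother (more low-pass) operator than \(\mathcal{A}_1\), and I would present the \(\mathbf{I}-\mathbf{P}\) case as the exact prototype.
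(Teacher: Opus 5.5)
Your plan is correct and follows the paper's proof in both parts. The paper also assumes that the two branches share an approximate eigenbasis, with $\mathbf{A}_2$ the smoother one, so that $\mathbf{A}_{\text{diff}}=\mathbf{A}_1-\gamma\mathbf{A}_2$ has DC response $1-\gamma$ and high-frequency response $\approx\lambda_k(\mathbf{A}_1)$; it gets relevance amplification from the same common-mode cancellation, where $\mathbf{A}_1=\mathbf{S}+\mathbf{N}$ and $\mathbf{A}_2=\epsilon\mathbf{S}+\mathbf{N}$ give the SNR factor $(1-\gamma\epsilon)/(1-\gamma)>1$ for $\epsilon<1$, and your ratio/Rayleigh-quotient formulation and exact $\mathbf{I}-\mathbf{P}=\mathbf{L}$ prototype are sharper versions of that argument. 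When you write it up: the Rayleigh-quotient device does not remove the joint-diagonalizability assumption, since the weighted-average form of $E(\mathcal{A}_{\mathrm{diff}}\mathbf{x})/\|\mathcal{A}_{\mathrm{diff}}\mathbf{x}\|^2$ needs both operators diagonal in the Laplacian eigenbasis; state the side conditions $\gamma\hat{\mathcal{A}}_2(0)\le\hat{\mathcal{A}}_1(0)$ and an analogue of $\epsilon<1$ (e.g.\ $\mathbf{N}_2=\epsilon\mathbf{N}_1$), without which your inequalities can fail; rename your mixing coefficient so it no longer collides with the graph frequency $\lambda$; and drop the rank-$2d_k$ remark for SCA itself, where the shared key makes $(\lambda_1\tilde C_{+}-(1-\lambda_1)\tilde C_{-})K^{\top}$ of rank at most $D$.
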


\subsection{Spiking Contrastive Attention Mechanism}
\label{method: spiking contrastive attention mechanism}

To mitigate the spectral bias of spiking self-attention, we develop SCA as a two-stage contrastive attention mechanism. It first constructs positive and negative contrast prototypes to aggregate global information, and then uses these prototypes to differentially refine full-resolution query features. This global-to-local design enhances high-frequency representations while maintaining efficient spike-driven computation.

\subsubsection{Stage I: Global Contrast Aggregation}

Given the input spiking features \(X \in \mathbb{R}^{T \times N \times C}\), the query (\(Q\)), key (\(K\)) and value (\(V\)) projections are first computed via learnable weight matrices. These are subsequently transformed into spike sequences through spiking neuron layers:
\begin{equation}
    \mathcal{M}_{\text{S}} = \mathcal{SN}(\text{BN}(XW_\mathcal{M})), \text{where } \mathcal{M} \in \{Q, K, V\},
\end{equation}
where \(Q_S, K_S, V_S \in \mathbb{R}^{T \times N \times D}\). To facilitate multi-head attention computation, these tensors are reshaped into \(Q_S, K_S, V_S \in \mathbb{R}^{T \times B \times M \times N \times D}\), where \(M\) denotes the number of attention heads and \(D\) represents the dimension of each head.

We extract contrast tokens directly from \(Q_S\) via spatial average pooling. The query matrix \(Q_S \in \mathbb{R}^{T \times B \times M \times N \times D}\) is reshaped to restore its 2D spatial structure, denoted as \(Q_S \to \tilde{Q}_S \in \mathbb{R}^{T \times B \times M \times H \times W \times D}\), where \(H \times W=N\). we apply average pooling along the spatial dimensions, with the kernel size and stride selected to downsample the resolution from \(H \times W\) to \(h \times w\):
\begin{equation}
    \tilde{t}_{\text{contrast}} = \text{AvgPool}_{H \times W \to h \times w}(\tilde{Q}_S).
\end{equation}
To bolster the capability of SNNs in capturing high-frequency information—drawing inspiration from differential operators in edge detection—we bifurcate the contrast token branch into positive and negative contrast streams. Concretely, we introduce two distinct learnable positional embeddings, \(e^+, e^- \in \mathbb{R}^{h \times w \times d}\), to construct these respective streams:
\begin{equation}
    \tilde{C}_{+}=\mathcal{SN}(\tilde{t}_{\text{contrast}}+e^{+}), \ \tilde{C}_{-}=\mathcal{SN}(\tilde{t}_{\text{contrast}}+e^-),
\end{equation}
where \(\tilde C_{+}\) and \(\tilde C_{-} \in \{0,1\}^{T \times h \times w \times d}\) represent the excitatory and inhibitory prototypes within the scene, respectively.

To accommodate the characteristics of SNNs, we employ scaled dot-product to modulate the sharpness of the attention distribution and leverage contrast tokens to aggregate global key-value information. Specifically, we compute the accumulated currents for the positive and negative contrast streams:
\begin{equation}
\label{equ: dual-stream currents}
    I^{(1)}_{\text{exc}}=(\tilde C_{+} K^T \cdot s)V, \ I^{(1)}_{\text{inh}}=(\tilde C_{-} K^T \cdot s)V,
\end{equation}
where \(s\) serves as a scaling factor to regulate attention sharpness. By introducing a learnable balancing coefficient , we implement differential fusion at the membrane potential level, allowing the model to dynamically integrate the excitatory current  and the inhibitory current:
\begin{equation}
    U_{\text{mem}}=\lambda_1 I^{(1)}_{\text{exc}}-(1-\lambda_1)I^{(1)}_{\text{inh}}.
\end{equation}
Finally, the membrane potential \(U_{\text{mem}}\) drives the LIF neurons to emit spikes, generating the global contrast map \(\hat{v} = \mathcal{SN}(U_{\text{mem}}) \in \{0, 1\}^{T \times h \times w \times d}\).

\begin{table*}[t]
\centering
\caption{Detailed comparison with other methods on ImageNet-1K.}
\vspace{-3mm}
\label{tab:image classification}
\resizebox{\textwidth}{!}{
\begin{tabular}{lcccccc}
\toprule
\textbf{Method} & \textbf{Type} & \textbf{Architecture} & 
\textbf{\begin{tabular}{@{}c@{}}Time\\ Step\end{tabular}} & 
\textbf{\begin{tabular}{@{}c@{}}Param\\ (M)\end{tabular}} & 
\textbf{\begin{tabular}{@{}c@{}}Energy\\ (mJ)\end{tabular}} & 
\textbf{\begin{tabular}{@{}c@{}}Top-1\\ Acc (\%)\end{tabular}} \\
\midrule
{ViT \cite{vaswani2017attention}}
& ANN & ViT-L/16 & 1 & 304.3 & 80.96 & 79.70 \\
\midrule
{PVT \cite{wang2021pyramid}}
& ANN & PVT-Large & 1 & 61.4 & 45.08 & 81.70 \\
\midrule
\multirow{2}{*}{Spikformer \cite{zhouspikformer}}
& \multirow{2}{*}{SNN} & Spikformer-8-384 & 4 & 16.8 & 7.73 & 70.24 \\
& & Spikformer-8-512 & 4 & 29.7 & 11.57 & 73.38 \\
\midrule
% \multirow{2}{*}{SpikingResformer \cite{shi2024spikingresformer}}
% & \multirow{2}{*}{SNN} & SpikingResformer-S & 4 & 17.8 & 3.37 & 75.95 \\
% & & SpikingResformer-M & 4 & 35.5 & 5.46 & 77.24 \\
% \midrule
\multirow{2}{*}{Spikingformer \cite{zhou2023spikingformer}}
& \multirow{2}{*}{SNN} & Spikingformer-8-512 & 4 & 29.7 & 4.69 & 74.79 \\
& & Spikingformer-8-768 & 4 & 66.4 & 13.68 & 75.85 \\
\midrule
\multirow{2}{*}{Max-Former \cite{fang2025spiking}}
& \multirow{2}{*}{SNN} & Max-10-384 & 4 & 16.23 & 4.89 & 77.82 \\
& & Max-10-512 & 4 & 28.65 &  7.49 & 79.86 \\
\midrule
\multirow{2}{*}{SDT-V1 \cite{yao2023spike}}
& \multirow{2}{*}{SNN} & Spike-driven Transformer-8-384 & 4 & 16.8 & 3.90 & 72.28 \\
& & Spike-driven Transformer-6-512 & 4 & 29.7 & 3.56 & 74.11 \\
\midrule
\multirow{2}{*}{\textbf{SDT-V1+SCA (Ours)}} 
& \multirow{2}{*}{SNN} & Spike-driven Transformer-8-384 & 4 & 16.8 & 3.78 \textbf{\textcolor{violet}{($\downarrow$0.12)}} & \textbf{73.31 \textcolor{violet}{($\uparrow$1.03)}} \\
& & Spike-driven Transformer-6-512 & 4 & 29.7 & 3.40 \textbf{\textcolor{violet}{($\downarrow$0.16)}} & \textbf{75.03 \textcolor{violet}{($\uparrow$0.92)}} \\
\midrule
\multirow{2}{*}{QKFormer \cite{zhou2024qkformer}}
& \multirow{2}{*}{SNN} & HST-10-384 & 4 & 16.47 & 15.13 & 78.80 \\
& & HST-10-512 & 4 & 29.08 & 21.99 & 82.04 \\
\midrule
\multirow{2}{*}{\textbf{QKFormer+SCA (Ours)}} 
& \multirow{2}{*}{SNN} & HST-10-384 & 4 & 16.47 & 15.02 \textbf{\textcolor{violet}{($\downarrow$0.11)}} & \textbf{79.49 \textcolor{violet}{($\uparrow$0.69)}} \\
& & HST-10-512 & 4 & 29.08 & 21.32 \textbf{\textcolor{violet}{($\downarrow$0.67)}} & \textbf{82.77 \textcolor{violet}{($\uparrow$0.73)}} \\
\midrule
\multirow{3}{*}{SDT-V3 \cite{yao2025scaling}}
& \multirow{3}{*}{SNN} & Efficient-transformer-S & 4 & 5.1 & 1.7 & 75.30 \\
& & Efficient-transformer-M & 4 & 10.0 & 3.0 & 78.50 \\
& & Efficient-transformer-L & 4 & 19.0 & 5.9 & 79.80 \\
\midrule
\multirow{3}{*}{\textbf{SDT-V3+SCA (Ours)}} 
& \multirow{3}{*}{SNN} & Efficient-transformer-S & 4 & 5.1 & 1.65 \textbf{\textcolor{violet}{($\downarrow$0.05)}} & \textbf{75.91 \textcolor{violet}{($\uparrow$0.61)}} \\
& & Efficient-transformer-M & 4 & 10.0 & 2.67 \textbf{\textcolor{violet}{($\downarrow$0.33)}} & \textbf{79.13 \textcolor{violet}{($\uparrow$0.63)}} \\
& & Efficient-transformer-L & 4 & 19.0 & 5.66 \textbf{\textcolor{violet}{($\downarrow$0.24)}} & \textbf{80.66 \textcolor{violet}{($\uparrow$0.86)}} \\
\bottomrule
\end{tabular}}
\vspace{-4mm}
\end{table*}

\subsubsection{Stage II: Local Differential Refinement}
The contrast tokens interact with the query tokens to extract features in a differential manner. Specifically, the query tokens \(Q_S\) derive their attention scores using the positive contrast tokens \(\tilde C_{+}\) and negative contrast tokens \(\tilde C_{-}\) within each attention head. In this configuration, the global contrast map \(\tilde v\) functions as the value, while \(\tilde C_{+}\) and \(\tilde C_{-}\) serve as the keys:
\begin{equation}
\label{equ: stage II current}
    I_{\text{exc}}^{(2)} = (Q_S \tilde C_+^T \cdot s) \hat{v}, \quad I_{\text{inh}}^{(2)} = (Q_S \tilde C_-^T \cdot s) \hat{v}.
\end{equation}
Leveraging the differential membrane potential mechanism once more, we compute the membrane potential of the final output neurons and generate the input spikes for the subsequent layer:
\begin{align}
& U_{\text{mem}}^{(2)} = \lambda_2 I_{\text{exc}}^{(2)} - (1 - \lambda_2) I_{\text{inh}}^{(2)},\\
& Y = \mathcal{SN}(U_{\text{mem}}^{(2)}) \in \{0, 1\}^{T \times H \times W \times d}.
\end{align}

\begin{figure*}[htbp]
    \centering    
    \includegraphics[width=\linewidth]{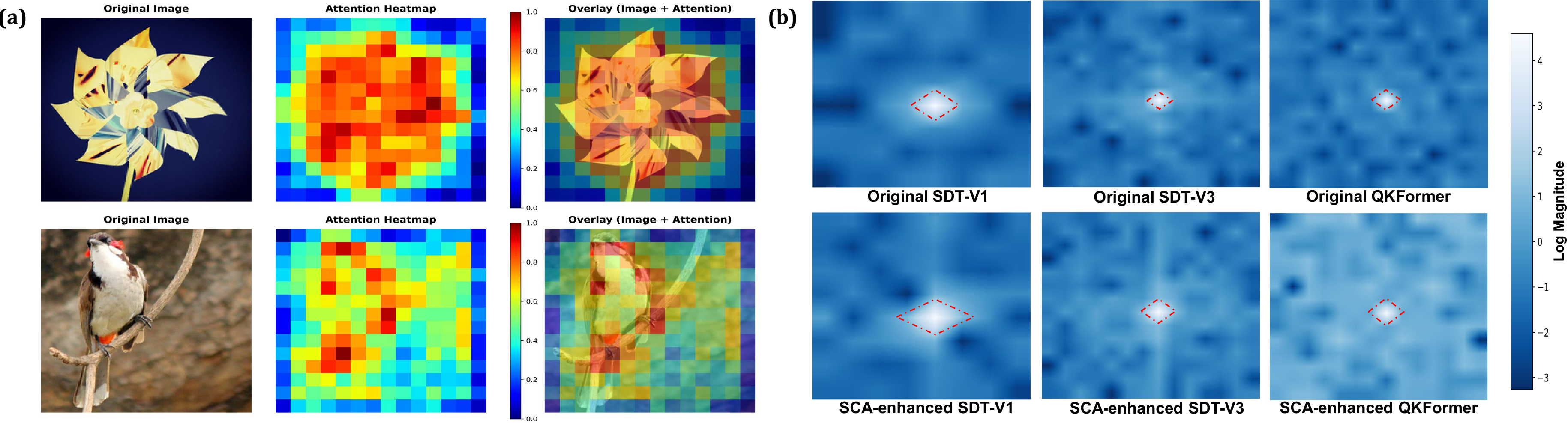}
    \vspace{-4mm}
    \caption{(a) Attention heatmap with SDT-V1+SCA on ImageNet. (b) Fourier spectrum of SDT-V1, SDT-V3, and QKFormer on ImageNet, along with their corresponding SCA-enhanced models.}
    \vspace{-3mm}
    \label{fig:attention map and fourier spectrum}
\end{figure*}

\subsection{Complexity Analysis}
\label{method: complexity analysis}

We retain the notations of Section \ref{method: spiking contrastive attention mechanism}: \(N\) tokens, \(n\) contrast tokens (\(n \ll N\)), \(D\) feature dimension per head, \(M\) heads, \(T\) timesteps and \(C=MD\) channels in total. 

\textbf{Stage I: Global Contrast Aggregation.} According to Eq. \ref{equ: dual-stream currents}, for each head: compute \(\tilde{C}_{+} K^T\), where \(\tilde{C}_{+} \in \mathbb{R}^{n \times D}\) and \(K^T \in \mathbb{R}^{D \times N}\), with a complexity of \(\mathcal{O}(nND)\). Subsequently, multiply the resulting matrix by \(V \in \mathbb{R}^{N \times D}\), which also has a complexity of \(\mathcal{O}(nND)\). Therefore, the computational cost for a single stream is \(2nND\). Given that there are two streams and the computation needs to be accumulated over \(T\) time steps:
\begin{equation}
    \ T \times 2 \times (2nND) = \mathcal{O}(T \cdot NnD).
\end{equation}

\textbf{Stage II: Local Differential Refinement.} Based on Eq. \ref{equ: stage II current}, for each head: compute \(Q_S \tilde{C}_{+}^{T}\). Here, \(Q_S \in \mathbb{R}^{N \times D}\) and \(\tilde{C}_{+}^{T} \in \mathbb{R}^{D \times n}\), with a complexity of \(\mathcal{O}(NnD)\). The resulting attention scores (\(\mathbb{R}^{N \times n}\)) are then multiplied by the global contrastive graph \(\hat{v} \in \mathbb{R}^{n \times D}\), which also has a complexity of \(\mathcal{O}(NnD)\). Therefore, the complexity of Stage II is:
\begin{equation}
    \ T \times 2 \times (2NnD) =  \mathcal{O}(T \cdot NnD).
\end{equation}

The total complexity for \(M\) heads is \(M \times (4T \cdot NnD + 4T \cdot NnD) = \mathcal{O}(T \cdot NnC)\). The complexity \cite{zhou2025spiking} of the interactions between \(Q\), \(K\) and \(V\) in original SSA is \(\mathcal{C}_{\text{SSA}}=\mathcal{O}(TN^2C)\). By introducing a differential mechanism, SCA reduces the computational cost of attention by a factor of \(N/n\):
\begin{equation}
    \frac{\mathcal{O}(TN^2C)}{\mathcal{O}(TNnC)}=\frac{N}{n} \gg 1.
\end{equation}
To demonstrate how this theoretical reduction in complexity translates to practical acceleration, we provide detailed evaluations of throughput and inference latency in Supp. \ref{appendix: throughput and inference latency}.
\section{Experiments}
\label{main: experiments}

\subsection{Image Classification}
\label{main: image classificatioon}

In the image classification task, we evaluate the efficacy of SCA on ImageNet-1K \cite{deng2009imagenet} using three representative state-of-the-art Spiking Transformer architectures: Spike-driven Transformer-V1 (SDT-V1) \cite{yao2023spike}, Spike-driven Transformer-V3 (SDT-V3) \cite{yao2025scaling}, and QKFormer \cite{zhou2024qkformer}, alongside a comprehensive comparative analysis against recent methods.

As shown in Table \ref{tab:image classification}, our SCA module consistently enhances performance across all three architectures. The most notable gains are observed in SDT-V1, achieving boosts of 1.03\% and 0.92\%, respectively. For SDT-V3, SCA yields improvements of 0.61\%, 0.63\%, and 0.86\% across backbone configurations of 5M, 10M, and 19M parameters. SCA introduces only negligible additional learnable parameters, which do not alter the reported model size after rounding. Similarly, QKFormer exhibits accuracy increases of 0.69\% and 0.73\% in its two configurations. These results demonstrate the versatility of the SCA module and validate its architecture-agnostic characteristic. Additionally, SCA enhances the foucus on key visual features (Fig. \ref{fig:attention map and fourier spectrum}(a) and \ref{fig:attention_heatmap_full}). Notably, the gains for QKFormer and SDT-V3 are more modest compared to SDT-V1. We attribute this to two factors: first, these models already possess strong baselines with limited room for saturation; second, their structural alignment with the Pyramid Vision Transformer \cite{liu2021swin,wang2021pyramid,yu2022metaformer} corroborates the finding that self-attention modules contribute relatively less to performance gains within the MetaFormer architecture \cite{yu2022metaformer}. Nevertheless, Fourier spectral analysis reveals an expansion of the high-frequency regions in SCA-enhanced models (Fig. \ref{fig:attention map and fourier spectrum}(b)), confirming the module's capability to restore high-frequency information. Our evaluation (Supp.\ref{appendix: source of gain}) also supports this: SCA yields substantial gains under high-pass filtering but only marginal gains under low-pass filtering, confirming its advantage lies in preserving high-frequency information rather than generic architectural enhancements.

\subsection{Semantic Segmentation and Event-based Tracking}
\label{main: semantic segmentation and event-based tracking tasks}

To further validate the efficacy of the proposed SCA, we extended our evaluation to more regression tasks, such as semantic segmentation and event-based tracking. For semantic segmentation, we employ the challenging ADE20K dataset \cite{zhou2017scene}, which comprises 20k and 2k images in the training and validation sets, covering 150 categories. We strictly adhere to the SDT-V3 experimental protocol \cite{yao2025scaling}. As shown in the Table \ref{tab:segmentation}, SCA achieved significant improvements of 2.68\%, 1.13\%, and 1.36\% in Mean Intersection over Union (MIoU) for model configurations with 5M, 10M, and 19M backbone parameters, respectively.

\begin{table}[htbp]
\centering
\vspace{-2mm}
\caption{Performance of segmentation.}
\vspace{-3mm}
\label{tab:segmentation}
\resizebox{\columnwidth}{!}{%
\begin{tabular}{lccc}
\toprule
\textbf{Architecture} & \textbf{Param (M)} & \textbf{Step} & \textbf{MIoU (\%)} \\
\midrule
ResNet-18 \cite{yu2022metaformer} & 15.5 & 1 & 32.9 \\
PVT-Tiny \cite{wang2021pyramid} & 17.0 & 1 & 35.7 \\
PVT-Small \cite{wang2021pyramid} & 28.2 & 1 & 39.8 \\
DeepLab-V3 \cite{zhang2022resnest} & 68.1 & 1 & 42.7 \\
\midrule
SDT-V2 \cite{yaospike} & 16.5 & 1 & 32.3 \\
SDT-V2 \cite{yaospike} & 16.5 & 4 & 33.6 \\
SDT-V2 \cite{yaospike} & 59.8 & 1 & 34.8 \\
SDT-V2 \cite{yaospike} & 59.8 & 4 & 35.3 \\
\cmidrule(l){1-4}
SDT-V3 \cite{yao2025scaling} & 5.1+1.4 & 4 & 33.6 \\
SDT-V3 \cite{yao2025scaling} & 10.0+1.4 & 4 & 40.1 \\
SDT-V3 \cite{yao2025scaling} & 19.0+1.4 & 4 & 41.3 \\
\cmidrule(l){1-4}
\textbf{SDT-V3-S+SCA (Ours)} & 5.1+1.4 & 4 & \textbf{36.28 \textcolor{violet}{($\uparrow$2.68)}} \\
\textbf{SDT-V3-M+SCA (Ours)} & 10.0+1.4 & 4 & \textbf{41.23 \textcolor{violet}{($\uparrow$1.13)}} \\
\textbf{SDT-V3-L+SCA (Ours)} & 19.0+1.4 & 4 & \textbf{42.76 \textcolor{violet}{($\uparrow$1.36)}} \\
\bottomrule
\end{tabular}%
}
\vspace{-2mm}
\end{table}

\begin{table*}[htbp]
\centering
\vspace{-2mm}
\caption{Performance on event-based tracking datasets.}
\vspace{-3mm}
\label{tab: tracking}
\resizebox{\textwidth}{!}{
\begin{tabular}{lcccccccc}
\toprule
\textbf{\multirow{2}{*}{Methods}} & \textbf{\multirow{2}{*}{\shortstack{Param.\\(M)}}} & \textbf{\multirow{2}{*}{\(T\)}} & \multicolumn{2}{c}{\textbf{FELT} \cite{wang2024long}} & \multicolumn{2}{c}{\textbf{FE108} \cite{zhang2021object}} & \multicolumn{2}{c}{\textbf{VisEvent} \cite{wang2023visevent}} \\
\cmidrule(lr){4-5} \cmidrule(lr){6-7} \cmidrule(lr){8-9}
& & & \textbf{AUC(\%)} & \textbf{PR(\%)} & \textbf{AUC(\%)} & \textbf{PR(\%)} & \textbf{AUC(\%)} & \textbf{PR(\%)} \\
\midrule
STARK \cite{yan2021learning} & 28.23 & 1 & 39.6 & 51.7 & 57.4 & 89.2 & 34.1 & 46.8 \\
ARTrack \cite{wei2023autoregressive} & 202.56 & 1 & 39.5 & 49.4 & 56.6 & 88.5 & 33.0 & 43.8 \\
OSTrack$_{256}$ \cite{ye2022joint} & 92.52 & 1 & 35.9 & 45.5 & 54.6 & 87.1 & 32.7 & 46.4 \\
HIPTrack \cite{cai2024hiptrack} & 120.41 & 1 & 38.2 & 48.9 & 50.8 & 81.0 & 32.1 & 45.2 \\
SNNTrack \cite{zhang2025spiking} & 31.40 & 5 & - & - & - & - & 35.4 & 50.4 \\
STNet \cite{zhang2022spiking} & 20.55 & 3 & - & - & - & - & 35.0 & 50.3 \\
SDTrack \cite{shan2025sdtrack} & 19.61 & 4 & 39.3 & 51.2 & 59.0 & 91.3 & 35.6 & 49.2 \\
\midrule
\textbf{SDTrack+SCA(Ours)} & 19.61 & 4 & \textbf{40.61}\textbf{\textcolor{violet}{($\uparrow$1.31)}} & \textbf{51.56}\textbf{\textcolor{violet}{($\uparrow$0.36)}} & \textbf{59.86}\textbf{\textcolor{violet}{($\uparrow$0.86)}} & \textbf{91.75}\textbf{\textcolor{violet}{($\uparrow$0.45)}} & \textbf{36.39}\textbf{\textcolor{violet}{($\uparrow$0.79)}} & \textbf{50.12}\textbf{\textcolor{violet}{($\uparrow$0.92)}} \\
\bottomrule
\end{tabular}}
\end{table*}

\begin{figure*}[htbp]
    \centering    
    \includegraphics[width=\linewidth]{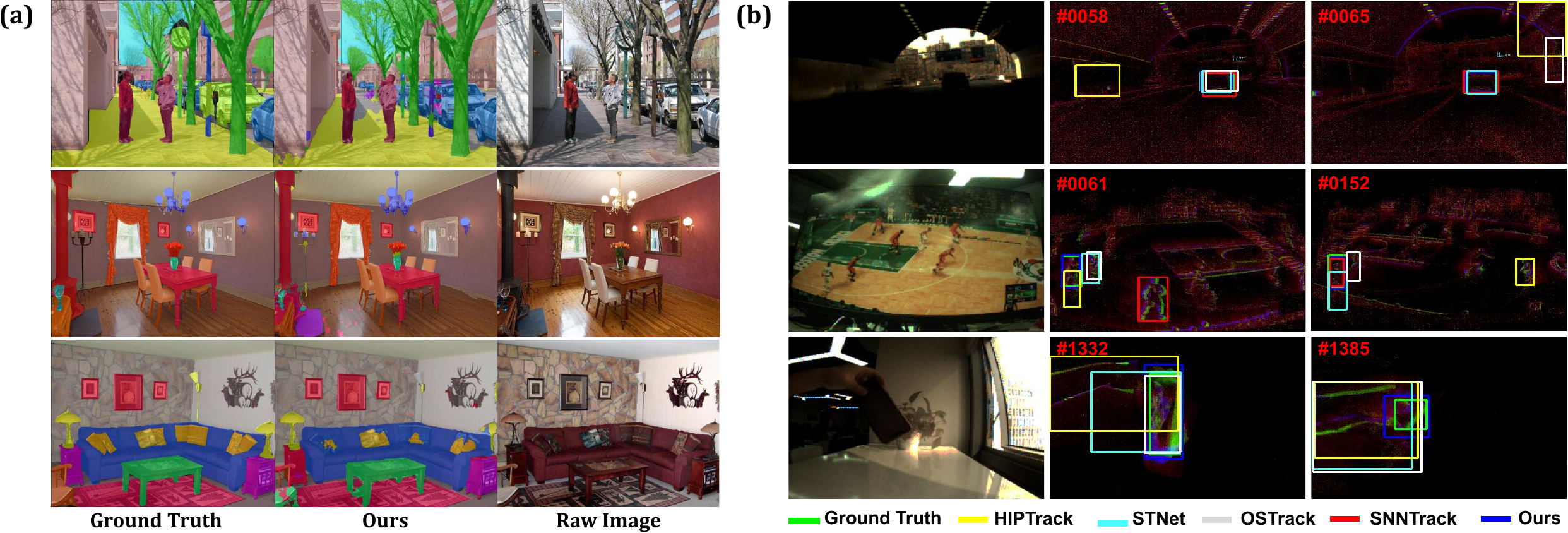}
    \vspace{-4mm}
    \caption{(a) Semantic segmentation results for ADE20K \cite{zhou2017scene} (SDT-V3-L+SCA). (b) Visualization of tracking results comparing our approach with other SOTA methods on the VisEvent \cite{wang2023visevent} dataset.}
    \vspace{-5mm}
    \label{fig:segmentation and tracking results}
\end{figure*}

Furthermore, we assessed SCA on event-based tracking, a challenging yet practical application domain for SNNs. We implemented the SDTrack Pipeline methodology \cite{shan2025sdtrack}, employing the Global Trajectory Prompt method to process event streams into event frames. Extensive experiments across the FE108 \cite{zhang2021object}, FELT \cite{wang2024long}, and VisEvent \cite{wang2023visevent} datasets consistently demonstrate that the SCA-enhanced SDTrack outperforms the original architecture on multiple metrics (as shown in table \ref{tab: tracking}). These results confirm the superior performance of SCA in complex regression tasks (provided in Fig. \ref{fig:segmentation and tracking results}(a), \ref{fig:segmentation_results_full} and Fig. \ref{fig:segmentation and tracking results}(b)).

\subsection{Ablation Study}

\textbf{Ablation on the Two-Stage Design.} We first investigate where original spiking self-attention (SSA) is replaced by our proposed method, including attention operations in Stage I (Global Contrast Aggregation) and Stage II (Local Differential Refinement). We employ the Spikformer architecture \cite{zhouspikformer}, comprising 4 blocks and 384 dimensions and conducted ablation studies on the CIFAR100 dataset \cite{krizhevsky2009learning}. The results in the Fig.\ref{fig:ablation_arch}(a) reveal a clear trend. When only Stage I is activated while Stage II uses SSA, performance decreases by 1.15\%. When only Stage II is activated while Stage I uses SSA, performance decreases by 1.07\%. This indicates that both global contrast aggregation and local differential refinement are crucial for visual tasks.

\textbf{Ablation on Contrast Prototype Construction.}  Fig.\ref{fig:ablation_arch}(b) investigates how to form two contrasting token streams for subsequent supply difference operations. Each token stream can be generated via pure learnable embedding (Emb.), via pooling (Pool), or via a combination (Pool+Emb.) that is adopted in our method. When only using embedding for both positive and negative streams, the model performance decreased by 0.52\% compared to the baseline. Replacing the positive stream with Pool resulted in a 0.17\% decrease in performance compared to the baseline. Replacing the negative stream with Pool led to a 0.43\% decrease in performance compared to the baseline. Performance peaked when both positive and negative streams employed Pool and embedding. These results indicate that combining the two operations achieves the strongest synergistic effect.

\textbf{Component-wise Ablation of SCA.} We have conducted a granular ablation study (Table \ref{tab:fine-grained ablation}) on CIFAR-100 using Spikformer (Origin SSA). The results demonstrate that (a) Contrast-token compression and Learnable embedding do not provide accuracy gains. Their primary roles are strictly to reduce computational complexity to \(\mathcal{O}(N)\) and stabilize token routing, respectively; (b) Dual-stream separation provides a modest architectural inductive bias; Differential fusion is the undeniable core driver of our performance leap. These results show that the improvements do not arise simply from an effective low-rank or global-local architectural design (which alone slightly degrades accuracy). Instead, the gains are driven by the differential fusion, confirming our central claim that explicitly mitigate the spectral bias is the primary source of improvement (as detailed in Supp. \ref{appendix: source of gain}).

\begin{figure}[htbp]
    \centering
    \vspace{-3mm}
    % --- 第一个元素：图 1 ---
    % 将 [b] 改为 [c]，实现垂直居中对齐
    \begin{minipage}[c]{0.49\textwidth}
        \centering
        \includegraphics[width=\linewidth]{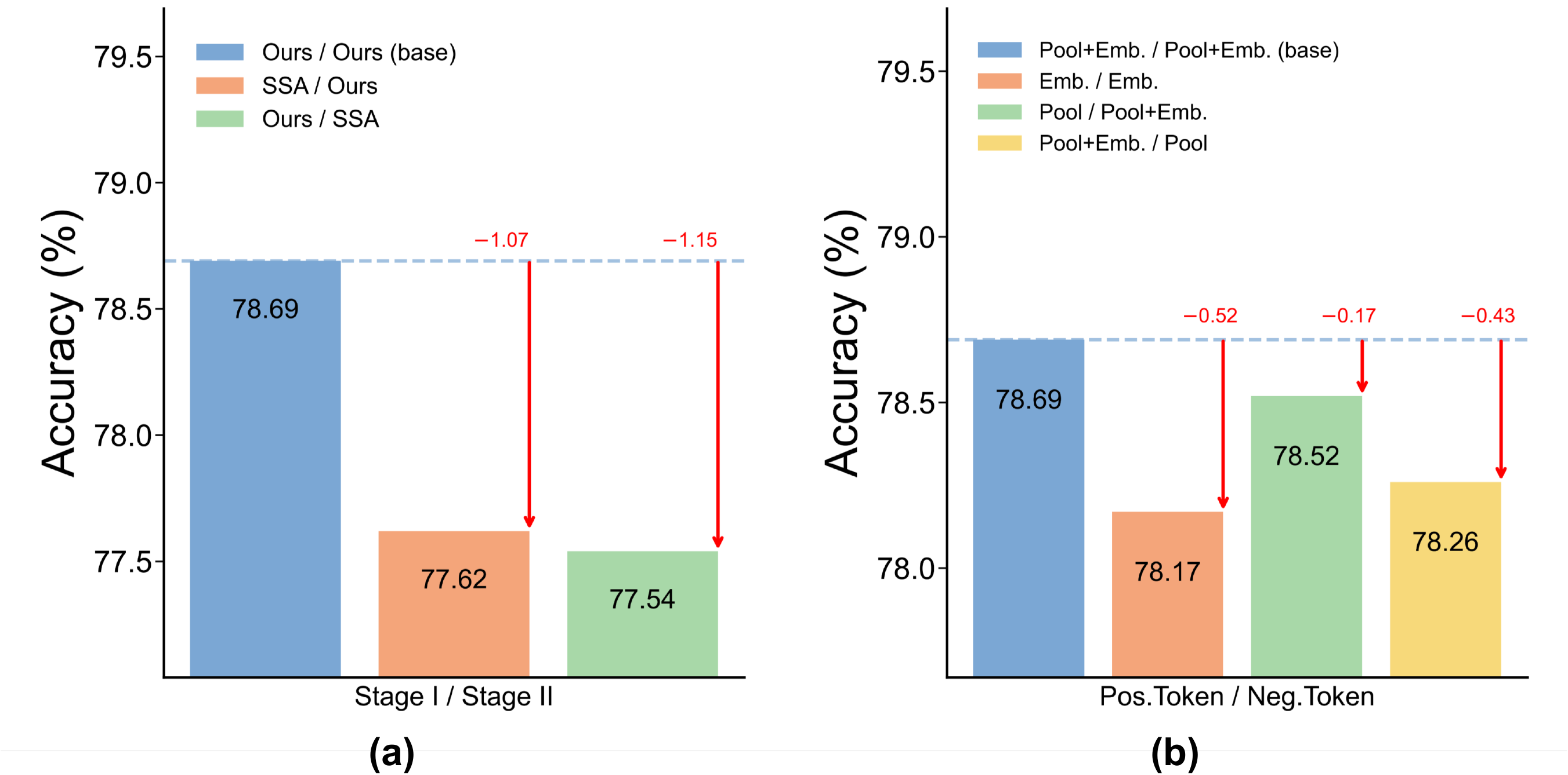}
        \vspace{-5mm}
        \caption{Ablation study of (a) architecture and (b) token generation on Spikformer.}
        \label{fig:ablation_arch}
    \end{minipage}
    \hfill % 自动撑开中间的空白
    % --- 第二个元素：表格 ---
    % 将 [b] 改为 [c]，实现垂直居中对齐
    \begin{minipage}[c]{0.489\textwidth}
        \centering
        \captionof{table}{Ablation study of components in SCA.}
        \label{tab:fine-grained ablation}
        \resizebox{\linewidth}{!}{
        \begin{tabular}{lc}
            \toprule
            \textbf{Method} & \textbf{Acc.(\%)} \\
            \midrule
            Origin SSA & 77.86 \\
            + Contrast-token compression & 77.74 \textcolor{violet}{($\downarrow$0.12)} \\
            + Dual-stream separation & 78.04 \textcolor{violet}{($\uparrow$0.18)} \\
            + Learnable embedding & 77.65 \textcolor{violet}{($\downarrow$0.21)}\\
            + Differential fusion & 78.69 \textcolor{violet}{($\uparrow$0.83)}\\
            \textbf{SCA (Ours)} & \textbf{78.69} \textcolor{violet}{($\uparrow$0.83)}\\
            \bottomrule
        \end{tabular}}
    \end{minipage}
    \vspace{-4mm}
\end{figure}

\section{Conclusion}

In this work, we delve into the performance gap between Spiking Transformers and ANN counterpart. Through theoretical analysis, we reveal spectral bias as the key factor limiting models' performance. To address this issue, we propose the Spiking Contrastive Attention (SCA) which inspired by edge detection and biological vision. SCA efficiently aggregates global context at lower complexity by mimicking the differential perception characteristics of biological neurons. Extensive experiments across image classification, semantic segmentation, and event-based tracking demonstrate that SCA improves model performance while mitigating the attenuation of high-frequency information. We believe SCA not only offers a novel perspective for addressing feature degradation in SNNs but also holds potential to become a foundational component for constructing next-generation energy-efficient Spiking Transformers.
{
    \small
    \bibliographystyle{ieeenat_fullname}
    \bibliography{main}
}

% WARNING: do not forget to delete the supplementary pages from your submission 
\clearpage
\renewcommand{\thesection}{\Alph{section}}
\setcounter{section}{0}
\setcounter{page}{1}
\maketitlesupplementary

\section{Supplementary Material}
\label{appendix: preliminry}

\subsection{Discrete Time Fourier Transform}
\label{appendix: discrete time fourier transform}

The Discrete-Time Fourier Transform (DTFT) serves as a fundamental mathematical tool for analyzing the properties of non-periodic discrete-time signals. It maps discrete sequences from the time domain to continuous functions in the frequency domain, providing essential theoretical underpinning for understanding spectral bias and frequency-domain filtering characteristics in deep neural networks. 

\subsubsection{Definition}
\label{appendix: definition of DTFT}

For a discrete-time signal \(x[n]\), where \(n \in \mathbb{Z}\), its Discrete-Time Fourier Transform (DTFT) \(X(e^{j\omega})\) is defined as:
\begin{equation}
    X(e^{j\omega}) = \sum_{n=-\infty}^{\infty} x[n] e^{-j\omega n},
\end{equation}
where \(\omega\) represents the normalized angular frequency in \(\text{radians/sample}\). Typically, \(X(e^{j\omega})\) is a complex-valued function comprising a magnitude spectrum \(|X(e^{j\omega})|\) and a phase spectrum \(\angle X(e^{j\omega})\).

\subsubsection{Convergence Conditions}
\label{appendix: convergence conditions}

The existence of the DTFT is not guaranteed for all discrete sequences. Typically, the sequence \(x[n]\) is required to satisfy the condition of \textbf{Absolute Summability}:
\begin{equation}
    \sum_{n=-\infty}^{\infty} |x[n]| < \infty.
\end{equation}
If this condition is met, the series converges uniformly to \(X(e^{j\omega})\). For finite-energy signals (which are square-summable, \textit{i.e.}, \(\sum |x[n]|^2 < \infty\)), the DTFT converges in the mean-square sense. In this context, the spectrum typically precludes the presence of Dirac \(\delta\) functions.

\subsubsection{Convolution Theorem}
\label{appendix: convolution theorem}

The convolution of two signals \(x[n]\) and \(h[n]\) in the time domain corresponds to the product of their DTFTs in the frequency domain:
\begin{equation}
\begin{split}
    y[n] &= x[n] * h[n] = \sum_{k=-\infty}^{\infty} x[k] h[n-k] \\
    &\quad \xleftrightarrow{\text{DTFT}} \quad Y(e^{j\omega}) = X(e^{j\omega}) H(e^{j\omega}).
\end{split}
\end{equation}
This property demonstrates that the convolution operation fundamentally acts as \textbf{frequency weighting} on the input signal \(X(e^{j\omega})\), with weights determined by the frequency response \(H(e^{j\omega})\) of the convolution kernel. This offers a direct mathematical perspective for interpreting feature extraction and filtering mechanisms within neural networks.

\subsection{Schur Product Theorem}
\label{appendix: schur product theorem}

This section details the Schur Product Theorem and its application within the softmax attention mechanism. Furthermore, we elucidate Equ. \ref{equ: schur's application} from the main text, specifically clarifying how the element-wise exponential operation restores the full-rank property to \(\exp(\mathbf{S})\), despite the input \(\mathbf{S}\) being a low-rank matrix.

\subsubsection{Definition}
\label{appendix: definition of schur product theorem}

Given two matrices \(\mathbf{A}, \mathbf{B} \in \mathbb{R}^{N \times N}\) of identical dimensions, their Schur Product is defined as the element-wise product, denoted by \(\mathbf{A} \circ \mathbf{B}\):
\begin{equation}
    [\mathbf{A} \circ \mathbf{B}]_{ij} = [\mathbf{A}]_{ij} [\mathbf{B}]_{ij}, \quad 1 \leq i, j \leq N.
\end{equation}
The classical Schur Product Theorem \cite{schur1911bemerkungen} primarily establishes the preservation of positive definiteness under this operation.

\begin{theorem}
\label{thm: schur product theorem}
If \(\mathbf{A}\) and \(\mathbf{B}\) are positive semi-definite (PSD) matrices, \textit{i.e.}, \(\mathbf{A} \succeq 0\) and \(\mathbf{B} \succeq 0\), then their Schur Product is also positive semi-definite: \(\mathbf{A} \circ \mathbf{B} \succeq 0\).
\end{theorem}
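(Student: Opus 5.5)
The plan is the classical proof via the Kronecker product. The key observation is that the Schur product \(\mathbf{A}\circ\mathbf{B}\) is a principal submatrix of \(\mathbf{A}\otimes\mathbf{B}\). Positive semi-definiteness passes to principal submatrices. So it suffices to show that \(\mathbf{A}\otimes\mathbf{B}\succeq 0\) whenever \(\mathbf{A},\mathbf{B}\succeq 0\).

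I would instead take a more elementary spectral route that avoids Kronecker bookkeeping.

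First, since \(\mathbf{B}\) is symmetric PSD, I write its eigen-decomposition
\[
\mathbf{B}=\sum_{k=1}^{N}\mu_k \mathbf{u}_k\mathbf{u}_k^{T},\qquad \mu_k\ge 0 .
\]
Because the Schur product is bilinear, this gives \(\mathbf{A}\circ\mathbf{B}=\sum_k \mu_k\,\mathbf{A}\circ(\mathbf{u}_k\mathbf{u}_k^{T})\). The PSD cone is closed under nonnegative combinations, so it suffices to prove \(\mathbf{A}\circ(\mathbf{u}\mathbf{u}^{T})\succeq 0\) for an arbitrary vector \(\mathbf{u}\in\mathbb{R}^N\).

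Second, I would identify this rank-one Schur product explicitly. Its \((i,j)\) entry is \(A_{ij}u_iu_j\), so
\[
\mathbf{A}\circ(\mathbf{u}\mathbf{u}^{T})=\mathbf{D}_{\mathbf{u}}\mathbf{A}\mathbf{D}_{\mathbf{u}},\qquad \mathbf{D}_{\mathbf{u}}=\mathrm{diag}(\mathbf{u}).
\]
This is a congruence of \(\mathbf{A}\). For any \(\mathbf{x}\in\mathbb{R}^N\),
\[
\mathbf{x}^T\mathbf{D}_{\mathbf{u}}\mathbf{A}\mathbf{D}_{\mathbf{u}}\mathbf{x}=(\mathbf{D}_{\mathbf{u}}\mathbf{x})^T\mathbf{A}(\mathbf{D}_{\mathbf{u}}\mathbf{x})\ge 0 ,
\]
because \(\mathbf{A}\succeq 0\).

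Third, I combine the pieces. Symmetry of \(\mathbf{A}\circ\mathbf{B}\) is immediate from the symmetry of \(\mathbf{A}\) and \(\mathbf{B}\). The quadratic form is then
\[
\mathbf{x}^T(\mathbf{A}\circ\mathbf{B})\mathbf{x}=\sum_k\mu_k(\mathbf{D}_{\mathbf{u}_k}\mathbf{x})^T\mathbf{A}(\mathbf{D}_{\mathbf{u}_k}\mathbf{x}).
\]
This is a nonnegative combination of nonnegative terms, so it is nonnegative, and hence \(\mathbf{A}\circ\mathbf{B}\succeq 0\).

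The only step needing care is the second, the identity \(\mathbf{A}\circ(\mathbf{u}\mathbf{u}^T)=\mathbf{D}_{\mathbf{u}}\mathbf{A}\mathbf{D}_{\mathbf{u}}\). It is a one-line entrywise check. The argument has no genuine obstacle.
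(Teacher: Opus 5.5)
Your proof is correct. The paper never proves this theorem: it states it as the classical result from Schur's 1911 paper, cites it, and uses it only to argue, via the Maclaurin expansion $\exp(\mathbf{S})=\sum_k \mathbf{S}^{\circ k}/k!$, that $\exp(\mathbf{S})$ is PSD whenever $\mathbf{S}$ is. There is therefore no competing argument to match; the citation buys the paper brevity, and you supply the standard self-contained textbook proof.

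Your opening paragraph sketches the Kronecker-product route, which is also valid, but you then set it aside. The argument you actually carry out does not need the spectrum of $\mathbf{A}\otimes\mathbf{B}$ or the principal-submatrix embedding. It uses only the spectral theorem for $\mathbf{B}$ and the entrywise identity $\mathbf{A}\circ(\mathbf{u}\mathbf{u}^{T})=\mathbf{D}_{\mathbf{u}}\mathbf{A}\mathbf{D}_{\mathbf{u}}$. It also shows exactly where each hypothesis enters: symmetry of $\mathbf{B}$ and $\mu_k\ge 0$ in the decomposition, and positive semidefiniteness of $\mathbf{A}$ in the congruence step.

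Your route has a side benefit. The decomposition $\mathbf{A}\circ\mathbf{B}=\sum_k \mu_k\,\mathbf{D}_{\mathbf{u}_k}\mathbf{A}\mathbf{D}_{\mathbf{u}_k}$ has only $\mathrm{rank}(\mathbf{B})$ nonzero terms, each of rank at most $\mathrm{rank}(\mathbf{A})$. This immediately gives $\mathrm{rank}(\mathbf{A}\circ\mathbf{B})\le\mathrm{rank}(\mathbf{A})\,\mathrm{rank}(\mathbf{B})$, which the paper quotes without proof in its rank-restoration discussion. For a general nonsymmetric $\mathbf{B}$, write $\mathbf{B}=\sum_k \mathbf{x}_k\mathbf{y}_k^{T}$ and use $\mathbf{A}\circ(\mathbf{x}\mathbf{y}^{T})=\mathbf{D}_{\mathbf{x}}\mathbf{A}\mathbf{D}_{\mathbf{y}}$.
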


This property guarantees that the element-wise product of two kernel matrices remains a valid kernel matrix.

\subsubsection{Connection to Element-wise Exponential}
\label{appendix: connection to element-wise exponential}

The core operation of softmax is the element-wise exponential \(\exp(\mathbf{S})\). For a matrix \(\mathbf{S}\), this can be expanded via a Maclaurin series into an infinite weighted sum of Schur powers:
\begin{equation}
\begin{split}
    \exp(\mathbf{S}) &= \mathbf{1}\mathbf{1}^\top + \mathbf{S} + \frac{1}{2!} (\mathbf{S} \circ \mathbf{S}) \\
    &\quad + \frac{1}{3!} (\mathbf{S} \circ \mathbf{S} \circ \mathbf{S}) + \dots \\
    &= \sum_{k=0}^{\infty} \frac{1}{k!} \mathbf{S}^{\circ k},
\end{split}
\end{equation}
where \(\mathbf{S}^{\circ k}\) denotes the \(k\)-th Schur product of \(\mathbf{S}\), and \(\mathbf{S}^{\circ 0}\) represents the all-ones matrix.

According to Theorem \ref{thm: schur product theorem}, if \(\mathbf{S}\) is positive semi-definite, then \(\exp(\mathbf{S})\), being a nonnegative linear combination of positive semi-definite matrices, must also be positive semi-definite (and is typically positive definite).

\subsubsection{Rank Restoration via Schur Product}
\label{appendix: rank restoration via schur product}

In vanilla self-attention, \(\mathbf{S} = \mathbf{Q}\mathbf{K}^\top\). Given that \(\mathbf{Q}, \mathbf{K} \in \mathbb{R}^{N \times d_k}\) and typically \(d_k \ll N\), the matrix \(\mathbf{S}\) is low-rank, \textit{i.e.}, \(\text{rank}(\mathbf{S}) \leq d_k\).

However, the Schur Product possesses the property of increasing matrix rank. For two matrices \(\mathbf{A}\) and \(\mathbf{B}\) with ranks \(r_a\) and \(r_b\) respectively, the rank of their Schur Product satisfies the following inequality:
\begin{equation}
    \text{rank}(\mathbf{A} \circ \mathbf{B}) \leq \text{rank}(\mathbf{A}) \text{rank}(\mathbf{B}).
\end{equation}
While this represents an upper bound, in the generic case, the rank of \(\mathbf{S} \circ \mathbf{S}\) tends to approach \(\min(N, d_k^2)\), which typically substantially exceeds the original rank \(d_k\) of \(\mathbf{S}\).

When examining the expansion \(\exp(\mathbf{S}) = \sum_{k=0}^{\infty} \frac{1}{k!} \mathbf{S}^{\circ k}\): a. This sum includes all higher-order Schur powers of \(\mathbf{S}\); b. Each term \(\mathbf{S}^{\circ k}\) spans a higher-dimensional eigenspace; c. Unless \(\mathbf{S}\) possesses a highly specific sparse structure (such as a diagonal matrix), the linear combinations of these Schur powers of different orders are extremely likely to be linearly independent, thereby filling the entire \(N \times N\) space.

Consequently, \(\exp(\mathbf{S})\) is full rank almost everywhere: \(\text{rank}(\exp(\mathbf{S})) = N\). Given \(\mathbf{P}_{\text{soft}} = \mathbf{D}^{-1} \exp(\mathbf{S})\), where \(\mathbf{D}^{-1}\) is an invertible diagonal matrix (and thus full rank), and since multiplication by a non-singular matrix preserves rank, it follows that:
\begin{equation}
    \text{rank}(\mathbf{P}_{\text{soft}}) = \min(\text{rank}(\mathbf{D}^{-1}), \text{rank}(\exp(\mathbf{S}))) = N.
\end{equation}
This substantiates the conclusion presented in the main text: the softmax operation restores the rank from \(d_k\) to \(N\), rendering the kernel space of the attention matrix trivial, \textit{i.e.}, \(\mathcal{N}(\mathbf{P}_{\text{soft}}) = \{\mathbf{0}\}\).

\subsection{Vanilla Self-Attention}
\label{appendix: vanilla self attention}

Vanilla self-attention (VSA) is the core component making ViT successful \cite{vaswani2017attention}. By weighting the image-patches feature value through the dot-product of query and key and softmax function, self-attention can capture the global dependence and interest representation \cite{katharopoulos2020transformers,qin2022cosformer}. Given an input feature sequence \(X \in \mathbb{R}^{T \times N \times D}\), the VSA has three float-point key components, namely query (\(Q_F\)), key (\(K_F\)), and value (\(V_F\)) which are calculated by learnable linear matrices \(W_Q,W_K,W_V \in \mathbb{R}^{D \times D}\) and \(X\):
\begin{equation}
    Q_{\mathcal{F}} = X W_Q, \, K_{\mathcal{F}} = X W_K, \, V_{\mathcal{F}} = X W_V, 
\end{equation}
where \(\mathcal{F}\) denotes the float-point form. The output of vanilla self-attention can be computed as:
\begin{equation}
    \text{VSA}(Q_{\mathcal{F}}, K_{\mathcal{F}}, V_{\mathcal{F}}) = \text{softmax} \left( \frac{Q_{\mathcal{F}} K_{\mathcal{F}}^{\mathrm{T}}}{\sqrt{d}} \right) V_{\mathcal{F}},
\end{equation}
where \(d=D/H\) is the feature dimension of one head and \(H\) is the head number. 

\subsection{Softmax-free Spiking Self-Attention}
\label{appendix: softmax-free spiking self attention}

For the following two reasons, VSA computation is not applicable for SNNs: a. Floating-point matrix multiplication involving \(Q_F\), \(K_F\), and the softmax operation includes exponential computations and division operations, which violate SNN computational rules; b. The quadratic space and time complexity of VSA sequence length fails to meet the efficient computation requirements of SNNs. Consequently, numerous studies have focused on developing attention mechanisms suitable for SNNs. This section provides a detailed introduction to the three spiking self-attention mechanisms analyzed in the main text that utilize LIF neurons as activation functions.

\subsubsection{Spiking Self-Attention}
\label{appendix: spiking self attention}

Zhou et al. \cite{zhouspikformer} proposed Spikformer, the first framework integrating the strengths of SNNs and Transformers. The Spiking Self Attention (SSA) mechanism in Spikformer simulates sparse visual features by using spiking queries, keys, and values without softmax. Its computation proceeds as follows:
\begin{align}
\label{equ: SSA}
&\begin{aligned}
 Q&=\mathcal{SN}_Q(\text{BN}(XW_Q)),\\
 K&=\mathcal{SN}_K(\text{BN},(XW_K)),\\
 V&=\mathcal{SN}_V(\text{BN}(XW_V))
\end{aligned} \\
&\text{SSA}'(Q,K,V)=\mathcal{SN}(QK^TV*s), \\
&\text{SSA}(Q,K,V)=\mathcal{SN}(\text{BN}(\text{Linear}(\text{SSA}'(Q,K,V)))),
\end{align}
where \(Q,K,V\ in \mathbb{R}^{T \times N \times D}\), \(s\) is a scaling factor which control the large value of matrix multiplication result.

\subsubsection{Spike-driven Self-Attention}
\label{appendix: spike-driven self attention}

Yao et al. \cite{yao2023spike} proposed Spike-driven Self Attention (SDSA) using only masks and addition operations. Given a spike input feature sequence \(S \in \mathbb{R}^{T \times N \times D}\), float-point \(Q\), \(K\), and \(V\) in \(\mathbb{R}^{T \times N \times D}\) are calculated by three learnable linear matrices, respectively. A spike neuron layer \(\mathcal{SN}(\cdot)\) follows, converting \(Q,K,V\) into spike tensor \(Q_S,K_S\) and \(V_S\). SDSA is presented as:
\begin{equation}
\label{equ: SDSA}
\begin{split}
    \text{SDSA}(Q,K,V)&=g(Q_S,K_S) \otimes V_S\\
    &=\mathcal{SN}(\text{SUM}_c(Q_S \otimes K_S)) \otimes V_S,
\end{split}
\end{equation}
where \(\otimes\) is the Hadamard product, \(g(\cdot)\) is used to compute the attention map, \(\text{SUM}_c\) represents the sum of each column. The outputs of both \(g(\cdot)\) and \(\text{SUM}_c(\cdot)\) are \(D\)-dimensional row vectors. The Hadamard product between spike tensors is equivalent to the mask operation.

\subsubsection{Meta Spike-driven Self-Attention}
\label{appendix: meta spike-driven self attention}

Meta-SpikeFormer \cite{yaospike} explores the impact of architecture, spike-driven self-attention, and skip connections on performance by extending Yao et al.'s Spike-driven Transformer \cite{yao2023spike} into a meta architecture. To distinguish it from SDSA, the attention mechanism in Meta-SpikeFormer will henceforth be referred to as Meta Spike-driven Self-Attention (Meta-SDSA). This paper employs the default attention mechanism in Meta-SpikeFormer (\textit{i.e.}, SDSA-3 in \cite{yaospike}) for analysis. Meta-SDSA use matrix multiplication, which is written as:
\begin{equation}
\label{equ: Meta-SDSA}
\begin{split}
    \text{Meta-SDSA}(Q_S,K_S,V_S)&=\mathcal{SN}_s(Q_S(K_S^T V_S))\\
    &=\mathcal{SN}_s((Q_SK_S^T)V_S),
\end{split}
\end{equation}
where \(\mathcal{SN}_s\) is the Spike layer with the threshold \(s \cdot u_{th}\). Meta-SDSA directly merge the \(s\) into the threshold of the spiking neuron to circumvent the multiplication by \(s\).

\subsection{Main Theorems and Proofs}
\label{appendix: main theorems and proofs}

\subsubsection{Proof of \texorpdfstring{\cref{thm: fourier analysis of spiking neuron}}{Theorem 3.1}}
\label{appendix: proof of theorem 3.1}

% \begin{customthm}{3.1}
% Applying the Discrete-Time Fourier Transform (DTFT) to the dynamic equation yields the frequency response function of the LIF neuron:
% \begin{equation}
%     H(\omega)=\frac{V(\omega)}{X(\omega)}=\frac{K}{1- \alpha e^{-j \omega}},
% \end{equation}
% where \(\omega\) denotes the angular frequency. Analysis of \(H(\omega)\) reveals that low-frequency signals pass with negligible attenuation, whereas in the high-frequency regime, the gain is minimal, resulting in severe signal attenuation. Consequently, the LIF neuron functions as a discrete-time low-pass filter in the frequency domain.
% \end{customthm}

\begin{customthm}{3.1}
The charging process of a single LIF neuron can be equivalently modeled as a first-order Infinite Impulse Response (IIR) low-pass filter, where its squared amplitude-frequency response decreases strictly monotonically with the digital angular frequency \(\omega \in [0, \pi]\). For an \(L\)-layer SNN, assuming the synaptic weight frequency response and the equivalent spike gain of the \(i\)-th layer are denoted by \(W_i(e^{j\omega})\) and \(k_i\) respectively, the overall end-to-end frequency response \(H_L(e^{j\omega})\) is exponentially modulated by the low-pass factor, specifically formulated as (proof in Supp. \ref{appendix: proof of theorem 3.1}):
\begin{equation}
    H_L(e^{j\omega}) = \left(\prod_{i=1}^L k_i W_i(e^{j\omega})\right) \left(\frac{1 - \beta}{1 - \beta e^{-j\omega}}\right)^L.
\end{equation}
\end{customthm}

\begin{proof}

For analytical tractability, we first isolate the sub-threshold charging dynamics of the LIF neuron, while the spike-triggered reset is treated as a nonlinear event modulation in the subsequent network-level analysis.

\textbf{DTFT Analysis of the Charging Process.} First, we isolate the charging equation of the LIF neuron's membrane potential in the absence of spikes. Assuming the neuron is in a sub-threshold state (\(S[t] = 0\)), we set the resting potential $V_{reset} = 0$ without loss of generality to facilitate linear frequency-domain analysis.

Based on Equ. \ref{eq: lif3}, we have $V[t] = H[t]$. Substituting this into Equ. \ref{eq: lif1} yields:
\begin{align}
    & V[t] = V[t-1] + \frac{1}{\tau}(X[t] - V[t-1]), \\
    & V[t] = \left(1 - \frac{1}{\tau}\right)V[t-1] + \frac{1}{\tau}X[t].
\end{align}

Let \(\beta = 1 - \frac{1}{\tau}\) denote the membrane potential decay factor. For a LIF neuron, the time constant \(\tau > 1\), therefore \(0 < \beta < 1\). The difference equation simplifies to:
\begin{equation}
    V[t] = \beta V[t-1] + (1 - \beta)X[t].
\end{equation}

Applying the Discrete-Time Fourier Transform (DTFT) to both sides and utilizing the time-shifting property \(x[t-1] \xrightarrow{DTFT} e^{-j\omega}X(e^{j\omega})\), we obtain:
\begin{equation}
    V(e^{j\omega}) = \beta e^{-j\omega}V(e^{j\omega}) + (1 - \beta)X(e^{j\omega}).
\end{equation}

Rearranging the terms yields the frequency response \(H_{charge}(e^{j\omega})\) from the input \(X(e^{j\omega})\) to the membrane potential \(V(e^{j\omega})\):
\begin{align}
    & V(e^{j\omega})(1 - \beta e^{-j\omega}) = (1 - \beta)X(e^{j\omega}), \\
    & H_{charge}(e^{j\omega}) = \frac{1 - \beta}{1 - \beta e^{-j\omega}}.
\end{align}

Mathematically, this represents the transfer function of a standard first-order Infinite Impulse Response (IIR) filter.

\textbf{Proof of Amplitude-Frequency Characteristics.} To rigorously prove its low-pass filtering properties, we compute the squared amplitude-frequency response \(|H_{charge}(e^{j\omega})|^2\):
\begin{align}
& |H_{charge}(e^{j\omega})|^2 = \frac{(1-\beta)^2}{|1 - \beta(\cos\omega - j\sin\omega)|^2}, \\
& |H_{charge}(e^{j\omega})|^2 = \frac{(1-\beta)^2}{(1 - \beta\cos\omega)^2 + (\beta\sin\omega)^2}, \\
& |H_{charge}(e^{j\omega})|^2 = \frac{(1-\beta)^2}{1 - 2\beta\cos\omega + \beta^2\cos^2\omega + \beta^2\sin^2\omega}, \\
& |H_{charge}(e^{j\omega})|^2 = \frac{(1-\beta)^2}{1 - 2\beta\cos\omega + \beta^2}.
\end{align}

Next, we evaluate the magnitude at the frequency extremes (\(\omega = 0\) and \(\omega = \pi\)): \textit{a. At the lowest frequency} (\(\omega = 0\)), \(\cos(0) = 1\):
\begin{equation}
    |H_{charge}(e^{j0})|^2 = \frac{(1-\beta)^2}{1 - 2\beta + \beta^2} = \frac{(1-\beta)^2}{(1-\beta)^2} = 1.
\end{equation}

\textit{b. At the highest frequency} (\(\omega = \pi\)), \(\cos(\pi) = -1\)):
\begin{equation}
    |H_{charge}(e^{j\pi})|^2 = \frac{(1-\beta)^2}{1 + 2\beta + \beta^2} = \frac{(1-\beta)^2}{(1+\beta)^2}.
\end{equation}

Because \(0 < \beta < 1\), it strictly follows that \((1-\beta)^2 < (1+\beta)^2\), yielding \(|H_{charge}(e^{j\pi})|^2 < 1\). 

In the interval \(0 \le \omega \le \pi\), as the frequency \(\omega\) increases, \(\cos\omega\) strictly decreases. This causes the denominator \(1 - 2\beta\cos\omega + \beta^2\) to monotonically increase, resulting in a strictly monotonic decrease in the overall amplitude response. This rigorously proves that the charging process of the LIF neuron inherently functions as a low-pass filter.

\textbf{Amplification of Low-Pass Characteristics at the Network Level.} While the spike emission process is highly non-linear, we can linearize it around the firing threshold \(V_{th}\) by approximating the step function \(\Theta\) with a local gain \(k\). In the frequency domain, the DTFT of the output spike \(S(e^{j\omega})\) is approximated as:
\begin{equation}
    S(e^{j\omega}) \approx k V(e^{j\omega}).
\end{equation}

When this output spike propagates through the subsequent synaptic weight network (with frequency response \(W(e^{j\omega})\)), the total transfer function of a single-layer neuron—from input to weighted output—becomes:
\begin{equation}
    H_{sys}(e^{j\omega}) = k W(e^{j\omega}) \frac{1 - \beta}{1 - \beta e^{-j\omega}}.
\end{equation}

In a SNN cascading across \(L\) layers, the overall end-to-end system frequency response evaluates to:
\begin{equation}
    H_L(e^{j\omega}) = \left(\prod_{i=1}^L k_i W_i(e^{j\omega})\right) \left(\frac{1 - \beta}{1 - \beta e^{-j\omega}}\right)^L.
\end{equation}

Here, the low-pass factor \(\left(\frac{1 - \beta}{1 - \beta e^{-j\omega}}\right)^L\) undergoes exponential amplification. This demonstrates that, even when accounting for synaptic weights and spike emission, the macroscopic SNN architecture strongly suppresses high-frequency noise and facilitates the transmission of low-frequency signals.

\textbf{When considering reset mechanism.} We extended Theorem 3.1 to mathematically incorporate the reset mechanism, demonstrating that the low-pass characteristic not only holds but is compounded by reset.

Let \(S[t-1]\) represent spike emission at time \(t-1\). We rewrite the full neural dynamic:
\begin{equation}
    V[t]=\alpha V[t-1](1-S[t-1])+KX[t].
\end{equation}

To isolate the nonlinear reset effect, we define "reset penalty" \(P[t] = \alpha V[t-1]S[t-1]\), representing membrane potential discarded upon crossing threshold. The equation simplifies to:
\begin{equation}
    V[t]=\alpha V[t-1]+KX[t]-P[t].
\end{equation}

Applying DTFT:
\begin{equation}
    V(\omega) = \alpha e^{-j\omega} V(\omega) + K X(\omega) - P(\omega).
\end{equation}

Solving for effective frequency response \(H_{eff}(\omega)=\frac{V(\omega)}{X(\omega)}\): 
\begin{equation}
    H_{eff}(\omega) = \frac{K}{1 - \alpha e^{-j\omega}} - \frac{P(\omega)}{X(\omega)(1 - \alpha e^{-j\omega})}.
\end{equation}

The first term is identical to low-pass filter \(H(\omega)\) from \cref{thm: fourier analysis of spiking neuron}, proving the RC-circuit leak mechanism governs baseline spectral property. The second term represents attenuation from reset. When sharp high-frequency features cause rapid \(V[t]\) accumulation, triggering a spike (\(S[t-1]=1\)), \(P[t]\) becomes non-zero, actively subtracting energy from the spectrum.

Therefore, incorporating reset reveals it acts as a nonlinear amplitude truncator (negative feedback). It does not alter the low-pass trend but further restricts the neuron's ability to transmit sustained high-frequency fluctuations. We will add this extended proof to the text to make theoretical foundation more robust.

\end{proof}

\subsubsection{Proof of Lemma \ref{lem: soft-free}}
\label{appendix: proof of lemma 3.3}

\begin{customlemma}{3.3}
If the feature dimension \(d_k < N\), the Dirichlet Energy of the output signal \(\mathbf{y}_{\text{soft-free}} = \mathbf{P}_{\text{soft-free}} \mathbf{x}\) is constrained by the null space of \(\mathbf{P}_{\text{soft-free}}\), resulting in the truncation of high-frequency components.
\end{customlemma}

\begin{proof}
\textbf{a. Rank Bounds.} Given \(\mathbf{P}_{\text{soft-free}} \propto \mathbf{Q}\mathbf{K}^T\) with \(\mathbf{Q}, \mathbf{K} \in \mathbb{R}^{N \times d_k}\), matrix rank inequalities yield:
\begin{equation}
    \text{rank}(\mathbf{P}_{\text{soft-free}}) \leq \min(\text{rank}(\mathbf{Q}), \text{rank}(\mathbf{K})) \leq d_k.
\end{equation}

Since typically \(N \gg d_k\), \(\mathbf{P}_{\text{soft-free}}\) is inherently a low-rank matrix.

\textbf{b. Orthogonal Decomposition and Energy Loss.} We decompose the input signal \(\mathbf{x}\) into \(\mathbf{x} = \mathbf{x}_{\parallel} + \mathbf{x}_{\perp}\), where \(\mathbf{x}_{\parallel} \in \text{RowSpace}(\mathbf{P}_{\text{soft-free}})\) and \(\mathbf{x}_{\perp} \in \mathcal{N}(\mathbf{P}_{\text{soft-free}})\) (the null space). As \(\text{dim}(\mathcal{N}(\mathbf{P}_{\text{soft-free}})) \ge N - d_k\), the vast majority of degrees of freedom reside within the null space. In natural signals, high-frequency noise and local details are typically orthogonal to principal components, implying that \(\mathbf{x}_{\perp}\) contains substantial high-frequency energy. The output signal is given by:
\begin{equation}
    \mathbf{y}_{\text{soft-free}} = \mathbf{P}_{\text{soft-free}}(\mathbf{x}_{\parallel} + \mathbf{x}_{\perp}) = \mathbf{P}_{\text{soft-free}}\mathbf{x}_{\parallel}.
\end{equation}

Consequently, all energy within \(\mathbf{x}_{\perp}\), including high-frequency variations, is nullified. The operator thus functions as a low-pass filter, resulting in:
\begin{equation}
    E_{\text{Dirichlet}}(\mathbf{y}_{\text{soft-free}}) \ll E_{\text{Dirichlet}}(\mathbf{x}).
\end{equation}

Thus, high-frequency components are truncated.
\end{proof}

\subsubsection{Proof of \texorpdfstring{\cref{thm: soft}}{Theorem 3.4}}
\label{appendix: proof of theorem 3.4}

\begin{customthm}{3.4}
The softmax operation employs an element-wise exponential transformation to map the rank-\(d_k\) linear matrix to a full-rank matrix, thereby maximizing the preservation of Dirichlet Energy.
\end{customthm}

\begin{figure*}[t]
    \centering
    \includegraphics[width=\linewidth]{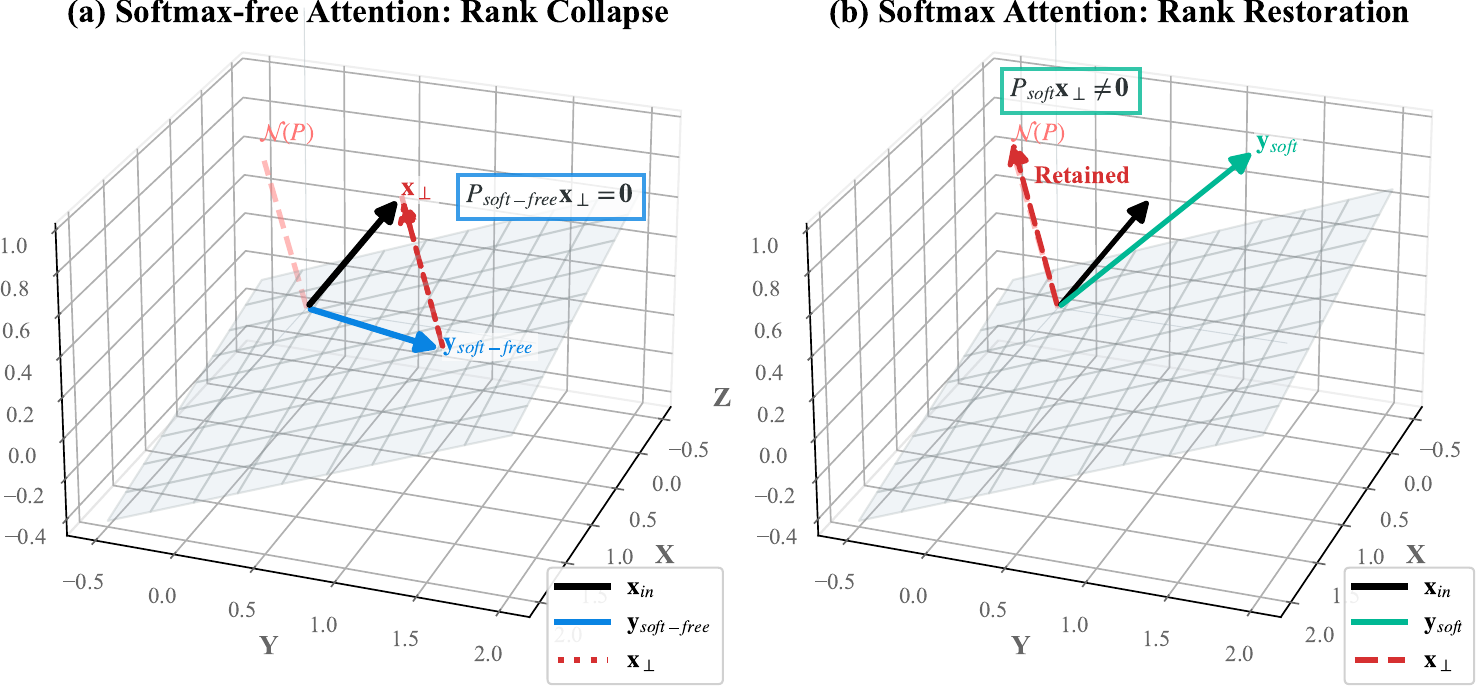}
    \caption{Rank restoration. Refer to \ref{appendix: proof of theorem 3.4}}
    \label{fig:rank restoration}
\end{figure*}

\begin{proof}
\label{proof: soft}
\textbf{a. Rank Recovery.} Let \(\mathbf{S}=\mathbf{QK}^T\). Given \(\mathbf{Q}, \mathbf{K} \in \mathbb{R}^{N \times d_k}\) where \(d_k \ll N\), by matrix rank inequalities:
\begin{equation}
    \text{rank}(\mathbf{S}) \leq \min(\text{rank}(\mathbf{Q}), \text{rank}(\mathbf{K})) \leq d_k,
\end{equation}
This implies that \(\mathbf{S}\) is a low-rank matrix, whose column vectors span only a low-dimensional subspace of \(\mathbb{R}^N\).

The core of softmax lies in the element-wise exponential operation \(\mathbf{A}=\exp(\mathbf{S})\). Based on the Schur Product Theorem (refer to \ref{appendix: schur product theorem}) \cite{schur1911bemerkungen,fuzhen2005schur} and the properties of element-wise exponentials, for a low-rank matrix \(\mathbf{S}\), its element-wise exponential matrix \(\exp(\mathbf{S})\) is full-rank (refer to \ref{appendix: rank restoration via schur product}). That is:
\begin{equation}
\label{equ: schur's application}
    \text{rank}(\mathbf{P}_{\text{soft}}) = \text{rank}(\mathbf{D}^{-1}\exp(\mathbf{S})) = N,
\end{equation}
where \(\mathbf{D}\) denotes the normalization diagonal matrix. Consequently, the softmax operation restores the rank of the attention matrix from \(d_k\) to \(N\). This implies that the kernel of \(\mathbf{P}_{\text{soft}}\) is trivial, \textit{i.e.}, \(\mathcal{N}(\mathbf{P}_{\text{soft}}) = \{\mathbf{0}\}\).

\textbf{b. Orthogonal Decomposition and Energy Loss}. The input graph signal \(\mathbf{x}\) can be orthogonally decomposed as \(\mathbf{x} = \mathbf{x}_{\parallel} + \mathbf{x}_{\perp}\), where \(\mathbf{x}_{\parallel} \in \text{RowSpace}(\mathbf{P}_{\text{soft-free}})\) and \(\mathbf{x}_{\perp} \in \mathcal{N}(\mathbf{P}_{\text{soft-free}})\). Given that \(\text{dim}(\mathcal{N}(\mathbf{P}_{\text{soft-free}})) \ge N - d_k\), the majority of high-frequency variations in high-dimensional space tend to reside within the component \(\mathbf{x}_{\perp}\).

For \(\mathbf{P}_{\text{soft-free}}\), computing the output signal \(\mathbf{y}_{\text{soft-free}} = \mathbf{P}_{\text{soft-free}}\mathbf{x}\) yields:
\begin{equation}
    \mathbf{y}_{\text{soft-free}} = \mathbf{P}_{\text{soft-free}}(\mathbf{x}_{\parallel} + \mathbf{x}_{\perp}) = \mathbf{P}_{\text{soft-free}}\mathbf{x}_{\parallel} + \mathbf{0},
\end{equation}
\(\mathbf{P}_{\text{soft-free}}\) filters components spanning \(N-d_k\) dimensions. Since high-frequency components are typically orthogonal to the principal directions (primarily exists in \(\mathbf{x}_{\perp}\)), their Dirichlet energy is nullified, resulting in the loss of high-frequency information: \(E_{\text{Dirichlet}}(\mathbf{P}_{\text{soft-free}}\mathbf{x}_{\perp}) = \mathbf{0}^T \mathbf{L} \mathbf{0}\) (as shown in fig. \ref{fig:rank restoration}).

In the case of \(\mathbf{P}_{\text{soft}}\), its full-rank property ensures that no non-zero signal is mapped to the zero vector. Therefore, for the high-frequency component \(\mathbf{x}_{\perp}\), we have \(\mathbf{y}_{\text{soft}} = \mathbf{P}_{\text{soft}}\mathbf{x}_{\perp} \neq \mathbf{0}\). Since the output remains non-zero and \(\mathbf{P}_{\text{soft}}\) retains its signal transformation capability, the output signal \(\mathbf{y}_{\text{soft}}\) preserves a non-zero Laplacian Quadratic Form response:
\begin{equation}
    E_{\text{Dirichlet}}(\mathbf{P}_{\text{soft}}\mathbf{x}_{\perp}) = (\mathbf{P}_{\text{soft}}\mathbf{x}_{\perp})^T \mathbf{L} (\mathbf{P}_{\text{soft}}\mathbf{x}_{\perp}) > 0.
\end{equation}
Consequently, the softmax operation mitigates the loss of high-frequency information to a certain extent.

\end{proof}

\subsubsection{Proof of \texorpdfstring{\cref{thm: softmax-free spiking self attention}}{Theorem 3.5}}
\label{appendix: proof of theorem 3.5}

\begin{definition}
\label{def: graph fourier transform}
(Graph Fourier Transform) We define the normalized Laplacian matrix of an undirected weighted graph as \(\mathcal{L} = I - D^{-1/2}A_{sym}D^{-1/2}\). The eigenvectors \(\{u_l\}_{l=1}^N\) of \(\mathcal{L}\) constitute the Graph Fourier Basis, while the corresponding eigenvalues \(0 \le \lambda_1 \le \lambda_2 \dots \le \lambda_N\) represent the frequencies. In this context, low frequencies correspond to signals that vary smoothly across adjacent nodes, whereas high frequencies correspond to signals exhibiting rapid fluctuations between neighboring nodes.
\end{definition}

\begin{customthm}{3.5}
\label{thm: spiking self attention}
SSA \cite{zhouspikformer}, SDSA \cite{yao2023spike}, and Meta-SDSA \cite{yaospike} behave as low-pass filters in the frequency domain.
\end{customthm}

\begin{proof}
The operators \(\mathcal{A}\) corresponding to SSA, SDSA, and Meta-SDSA are shown in the table \ref{tab:unified spiking self attention representation}. Below, we will elaborate on their frequency-domain properties from the perspective of Graph Signal Processing (GSP). In the standard formulation of GSP, the output \(O\) is obtained by applying the adjacency matrix \(\mathcal{A}\) to the input \(V\): \(O=\mathcal{A} \cdot V\).

\textbf{a. For SSA and Meta-SDSA}. SSA and Meta-SDSA are represented in the forms given in Equ. \ref{equ: SSA} and Equ. \ref{equ: Meta-SDSA}, respectively. For SSA, the attention map is first computed as:
\begin{equation}
    O = \underbrace{(\mathbf{Q} \mathbf{K}^\top)}_{\text{Explicit } \mathcal{A}} \mathbf{V},
\end{equation}
where \(\mathcal{A}\) is an explicitly computed \(N \times N\) matrix. Therefore, the adjacency matrix \(\mathcal{A}\) from the GSP perspective is \(\mathbf{Q}\mathbf{K}^T\).

Meta-SDSA exploits the associativity of matrix multiplication to alter the computation sequence: \(O = \mathbf{Q} (\mathbf{K}^\top \mathbf{V})\). By invoking the associative property to revert this equation to the standard GSP formulation, \(\mathbf{Q} (\mathbf{K}^\top \mathbf{V}) = (\mathbf{Q} \mathbf{K}^\top) \mathbf{V}\). Consequently, the equivalent adjacency matrix defined within this system remains: \(\mathcal{A} = \mathbf{Q} \mathbf{K}^T\).

Therefore, from the GSP perspective, Meta-SDSA merely modifies the computational path (reducing complexity from \(O(N^2)\) to \(O(N)\)) without altering the underlying graph connectivity definition. That is, the theoretical connection weight between node i and node j remains defined by the similarity between \(\mathbf{Q}_i\) and \(\mathbf{K}_j\).

\textit{\textbf{Step 1}}. Let \(\mathbf{Q}, \mathbf{K} \in \mathbb{R}^{N \times d}\), where \(N\) denotes the sequence length and d represents the feature dimension. Typically, in Transformer architectures, \(d \ll N\). The rank of the operator \(\mathcal{A}\) satisfies:
\begin{equation}
    \text{rank}(\mathcal{A}) \le \min(N, d) = d.
\end{equation}
This implies that \(\mathcal{A}\) possesses at most d non-zero eigenvalues.

\textit{\textbf{Step 2}}. Assuming \(\mathbf{Q}\) and \(\mathbf{K}\) are approximately identically distributed, \(\mathcal{A} \approx \mathbf{Q} \mathbf{Q}^T\) constitutes a symmetric Positive Semi-Definite matrix. We perform the eigen-decomposition of \(\mathcal{A}\):
\begin{equation}
    \mathcal{A} = U \Lambda_{\mathcal{A}} U^\top = \sum_{i=1}^{N} \mu_i u_i u_i^\top,
\end{equation}
where \(\mu_1 \ge \mu_2 \ge \dots \ge \mu_N\) are the eigenvalues of \(\mathcal{A}\). Given the rank constraint \(\text{rank}(\mathcal{A}) \le d\), it follows that: \(\mu_i = 0, \quad \forall i > d\). Consequently, the spectral response function is defined as:
\begin{equation}
    \hat{h}(\lambda_i) = \begin{cases} \mu_i & \text{if } 1 \le i \le d \\ 0 & \text{if } i > d \end{cases}.
\end{equation}

\textit{\textbf{Step 3}}. Transforming the input \(\mathbf{V}\) into the frequency domain and filtering via \(\mathcal{A}\) yields:
\begin{equation}
    \mathbf{O} = \mathcal{A} \cdot V = \sum_{i=1}^{d} \mu_i (u_i^\top V) u_i + \sum_{i=d+1}^{N} 0 \cdot (u_i^\top V) u_i.
\end{equation}

The equation above demonstrates that \(\mathcal{A}\) effectively filters out high-frequency components corresponding to \(i > d\). Therefore, in contrast to softmax attention, the softmax-free attention operator \(\mathcal{A}=\mathbf{Q}\mathbf{K}^T\), when under the perspective of GSP, functions as a low-pass filter.

\textbf{b. For SDSA.} SDSA is expressed in the form of Equ. \ref{equ: SDSA}. In the computation of SDSA, the Hadamard product of \(\mathbf{Q}\) and \(\mathbf{K}\) can be viewed as a masking operation:
\begin{equation}
    \text{SDSA}(Q, K, V) = \underbrace{(\mathbf{Q} \otimes \mathbf{K})}_{\text{Mask } M} \otimes \mathbf{V},
\end{equation}
where M = \(Q \otimes K\) represents a generated Mask vector (or a weight vector in \(\mathbb{R}^{N \times 1}\) following channel-wise summation). For simplification, let us assume \(M\) is a vector \(m \in \mathbb{R}^N\) of length \(N\), where \(m_i\) denotes the importance weight of the \(i\)-th token. Consequently, the calculation for the \(i\)-th row of the output \(O\) (corresponding to the \(i\)-th node) is given by \(O_i = m_i \cdot V_i\), implying that the \(i\)-th output is strictly the \(i\)-th input \(V_i\) scaled by a scalar coefficient \(m_i\).

In the matrix multiplication, row-wise weighting of a matrix \(V\) by a vector \(m\) is equivalent to left-multiplication by a diagonal matrix constructed from \(m\).

We define the diagonal matrix \(\mathcal{A} \in \mathbb{R}^{N \times N}\) such that its diagonal elements correspond to \(m\), with all off-diagonal elements set to zero:
\begin{equation}
    \mathcal{A} = \text{diag}(m) = \begin{bmatrix} m_1 & 0 & \dots & 0 \\ 0 & m_2 & \dots & 0 \\ \vdots & \vdots & \ddots & \vdots \\ 0 & 0 & \dots & m_N \end{bmatrix}.
\end{equation}

Performing the matrix multiplication \(\mathcal{A} V\) yields:
\begin{equation}
    (\mathcal{A} V)_i = \sum_{j=1}^{N} \mathcal{A}_{ij} V_j = \mathcal{A}_{ii} V_i = m_i V_i,
\end{equation}

This aligns perfectly with the original SDSA formulation. From the GSP perspective, SDSA is equivalent to left-multiplication by a diagonal matrix. The adjacency matrix entry \(\mathcal{A}_{ij}\) describes the intensity of the influence of node \(j\) on node \(i\). Thus, when \(\mathcal{A}\) is a diagonal matrix: for \(i = j\): \(\mathcal{A}_{ii} = Q_i \cdot K_i\); for \(i \neq j\): \(\mathcal{A}_{ij} = 0\).

Let the input signal \(V\) consist of a valid signal component \(V_{\text{signal}}\) and high-frequency noise \(\xi_{\text{noise}}\): \(V = V_{\text{signal}} + \xi_{\text{noise}}\), where \(\xi_{\text{noise}}\) manifests as random high-frequency perturbations across the entire graph.

In the frequency domain, \(\mathcal{A}=\text{diag}(M(\textbf{Q},\textbf{K}))\) is equivalent to convolving the spectrum \(\hat{V}(\omega)\) of the signal \(V\) with a gating spectrum \(\hat{M}(\omega)\). Since \(M\) approaches 1 only at sparse, strongly correlated positions (\textit{i.e.}, where \(\mathbf{Q}\) matches \(\mathbf{K}\)), it acts as a narrow-band filter:
\begin{itemize}
    \item For the noise \(\xi_{\text{noise}}\): Since the noise is randomly distributed and \(M\) is extremely sparse, \(\text{diag}(M) \cdot \xi_{noise} \approx 0\). SDSA effectively eliminates the majority of spatial high-frequency noise by severing connections.
    \item For the signal \(V_{\text{signal}}\): \(M\) acts as an open gate only when signal features are significant (\(\mathbf{Q}\) matches \(\mathbf{K}\)).
\end{itemize}

Therefore, SDSA is equivalent to a low-pass filter.
\end{proof}

\subsubsection{Proof of \texorpdfstring{\cref{thm: differential}}{Theorem 4.1}}
\label{appendix: proof of theorem 4.1}

\begin{customthm}{4.1}
Introducing a differential mechanism into attention enhances high-frequency information. Furthermore, the differential mechanism computes attention scores as the difference between two separate attention maps, thereby amplifying attention to relevant context.
\end{customthm}

\begin{proof}
\textbf{a. Frequency Response and High-frequency Enhancement.} Let \(\mathbf{A}_1\) and \(\mathbf{A}_2\) denote two attention branches in the absence of a differential mechanism. We define the general form of differential attention as:
\begin{equation}
    \mathbf{A}_{\text{diff}} = \mathbf{A}_1 - \gamma \mathbf{A}_2, \ \gamma \in (0,1].
\end{equation}
Given the context of identical input, the spectral structures of the two attention branches are highly correlated. Thus, assuming \(\mathbf{A}_1\) and \(\mathbf{A}_2\) share an approximate eigenbasis \(\mathbf{U}\), the spectral response (eigenvalues) of \(\mathbf{A}_{\text{diff}}\) is given by:
\begin{equation}
    \lambda_k(\mathbf{A}_{\text{diff}}) \approx \lambda_k(\mathbf{A}_1) - \gamma \lambda_k(\mathbf{A}_2).
\end{equation}

In the low-frequency regime (\(k=1\)), \(\lambda_{DC}(\mathbf{A}_{diff}) = 1 - \gamma \cdot 1 = 1 - \gamma\). As \(\gamma \to 1\), \(\lambda_1(\mathbf{A}_{diff}) \to 0\). This implies that the differential mechanism eliminates the dominant low-frequency/DC component, preventing information from collapsing towards the mean.

In the high-frequency regime (\(k \ge 1\)), let \(\mathbf{A}_1\) be the attention branch focusing on local features (sharper, slower high-frequency decay), and \(\mathbf{A}_2\) be the branch focusing on global background (smoother, faster high-frequency decay). Consequently, the following inequality holds in the high-frequency band:
\begin{equation}
    |\lambda_k(\mathbf{A}_1)| > |\lambda_k(\mathbf{A}_2)| \approx 0.
\end{equation}
Thus, the high-frequency response of the differential operator is: \(\lambda_k(\mathbf{A}_{\text{diff}}) \approx \lambda_k(\mathbf{A}_1) - 0 = \lambda_k(\mathbf{A}_1)\).

In summary, the transfer function \(h(\lambda)\) of the differential operator exhibits band-pass or high-pass characteristics:
\begin{equation}
    |h_{\text{diff}}(\lambda)| \approx \begin{cases}
    0 & \substack{\text{if } \lambda \approx \lambda_{DC}\\
    \quad (\text{Global smoothing suppression})}\\
    |\lambda(\mathbf{A}_1)| & \substack{\text{if } \lambda \in \text{high frequency}\\
    \quad (\text{Detail preservation})}
    \end{cases}
\end{equation}

\textbf{b. Contextual SNR Amplification via Common-Mode Noise Elimination.} Assume the attention distribution is composed of a sparse relevant context signal \(\mathbf{S}\) and dense isotropic noise \(\mathbf{N}\). The differential mechanism can significantly boost the output Signal-to-Noise Ratio (SNR) compared to single-head attention by eliminating Common-Mode Noise. The proof is as follows:

We decompose the attention map into signal and noise components: \(\mathbf{A} = \mathbf{A}^{\text{signal}} + \mathbf{A}^{\text{noise}}\), where \(\mathbf{A}^{\text{signal}}\) is a sparse matrix possessing significant values \(s_{ij}\) only at relevant Query-Key pairs \((i,j) \in \mathcal{E}_{rel}\). Conversely, \(\mathbf{A}^{\text{noise}}\) is a dense matrix representing the Probability Leakage induced by the softmax operation, assuming it approximates a uniform distribution \(\frac{1}{N}\mathbf{1}\mathbf{1}^\top\) or smooth Gaussian noise.

Consider two attention heads \(\mathbf{A}_1\) and \(\mathbf{A}_2\). Let \(\mathbf{A}_1\) capture the primary signal while retaining background noise: \(\mathbf{A}_1 = \mathbf{S} + \mathbf{N}\). Let \(\mathbf{A}_2\) capture smoother global dependencies, comprised primarily of noise (or serving as a smooth signal baseline): \(\mathbf{A}_2 = \epsilon \mathbf{S} + \mathbf{N}\). Here, \(\epsilon \in [0, 1)\) represents the sensitivity coefficient of \(\mathbf{A}_2\) to the signal, typically satisfying \(\epsilon \ll 1\).

Defining the SNR as the ratio of signal energy to noise energy, the SNR for \(\mathbf{A}_1\) is given by \(\text{SNR}_{\text{orig}} = \frac{||\mathbf{S}||_F}{||\mathbf{N}||_F}\). For the differential mechanism, we compute \(\mathbf{A}_{\text{diff}} = \mathbf{A}_1 - \gamma \mathbf{A}_2\):
\begin{equation}
    \mathbf{A}_{diff} = (\mathbf{S} + \mathbf{N}) - \gamma (\epsilon \mathbf{S} + \mathbf{N}) = (1 - \gamma \epsilon)\mathbf{S} + (1 - \gamma)\mathbf{N}.
\end{equation}

In this formulation, the effective signal component is \((1 - \gamma \epsilon)\mathbf{S}\), and the residual noise component is \((1 - \gamma)\mathbf{N}\). Consequently, the SNR of the differential operator is:
\begin{equation}
\begin{split}
\text{SNR}_{\text{diff}} &= \frac{||(1 - \gamma \epsilon)\mathbf{S}||_F}{||(1 - \gamma)\mathbf{N}||_F}\\
&= \frac{1 - \gamma \epsilon}{1 - \gamma} \cdot \frac{||\mathbf{S}||_F}{||\mathbf{N}||_F}\\
&= \underbrace{\frac{1 - \gamma \epsilon}{1 - \gamma}}_{\text{Amplification Factor }\kappa} \cdot \text{SNR}_{orig}.
\end{split}
\end{equation}

Examining the amplification factor \(\kappa(\gamma)\), when \(\gamma\) approaches 1 and \(\epsilon\) is small (implying \(\mathbf{A}_2\) is predominantly noise):
\begin{equation}
    \lim_{\gamma \to 1^-} \kappa = \lim_{\gamma \to 1^-} \frac{1 - \epsilon}{1 - \gamma} \to \infty.
\end{equation}
Even in practical settings where \(\gamma\) is a fixed value (e.g., 0.9), provided that \(\epsilon < 1\) (i.e., \(\mathbf{A}_1\) contains more valid information than \(\mathbf{A}_2\)), the condition \(\kappa > 1\) holds universally.

In summary, the differential mechanism eliminates the background noise shared by \(\mathbf{A}_1\) and \(\mathbf{A}_2\) (Common-Mode Noise), thereby mathematically guaranteeing \(\text{SNR}_{\text{diff}} > \text{SNR}_{\text{orig}}\). This effectively amplifies the attention weights assigned to relevant contexts.

\end{proof}

\subsection{Experimental Details}
\label{appendix: experimental details}

This section provides details on the experiments: the image classification experiment (ImageNet-1K) was conducted on 8 NVIDIA H100 GPUs, while the semantic segmentation (AED20K) and event-based tracking experiments were conducted on 8 NVIDIA 4090 GPUs.

\subsubsection{Image Classification}

All experiments were conducted using the PyTorch framework on the ImageNet-1K dataset \cite{deng2009imagenet}. The hyperparameters for each architecture are shown in the Table \ref{tab: hyperpara of models}.

\begin{table*}[htbp]
    \centering
    \caption{Comparison of Hyperparameters for Different Model Architectures}
    \label{tab: hyperpara of models}
    % lccc 表示第一列左对齐，后三列居中对齐
    \begin{tabular}{lccc}
        \toprule
        \textbf{Hyper-parameter} & \textbf{SDT-V1} & \textbf{QKformer} & \textbf{SDT-V3} \\
        \midrule
        Timestep & 4 & 4 & 4 \\
        Epochs & 300 & 200 & 200 \\
        Resolution & 224 & 224 & 224 \\
        Batch size & 48 & 100 & 600 \\
        Optimizer & AdamW & AdamW & LAMB \\
        Base Learning rate & 1.5e-5 & 6e-4 & 6e-4 \\
        Learning rate decay & Cosine & Layer-wise 1.0 & Layer-wise 1.0 \\
        Warmup epochs & 20 & 5 & 10 \\
        Weight decay & 0.01 & 0.05 & 0.05 \\
        Rand Augment & rand-m5-mstd0.5-inc1 & rand-m9-mstd0.5-inc1 & rand-m9-mstd0.5-inc1 \\
        Mixup & 0.2 & 0 & 0 \\
        Cutmix & 1.0 & 0 & 0 \\
        Label smoothing & 0.1 & 0.1 & 0.1 \\
        \bottomrule
    \end{tabular}
\end{table*}

To ensure fair comparisons between architectures, we strictly adhered to the open-source network architectures of Spike-driven Transformer V1 \cite{yao2023spike}, QKFormer \cite{zhou2024qkformer}, and Spike-driven Transformer V3 \cite{yao2025scaling}. Crucially, we only replaced the self-attention computation module within each architecture while keeping all other components unchanged. This controlled experimental design enabled us to isolate the impact of different attention mechanisms on model performance.

\begin{figure*}[t]
    \centering    
    \includegraphics[width=\linewidth]{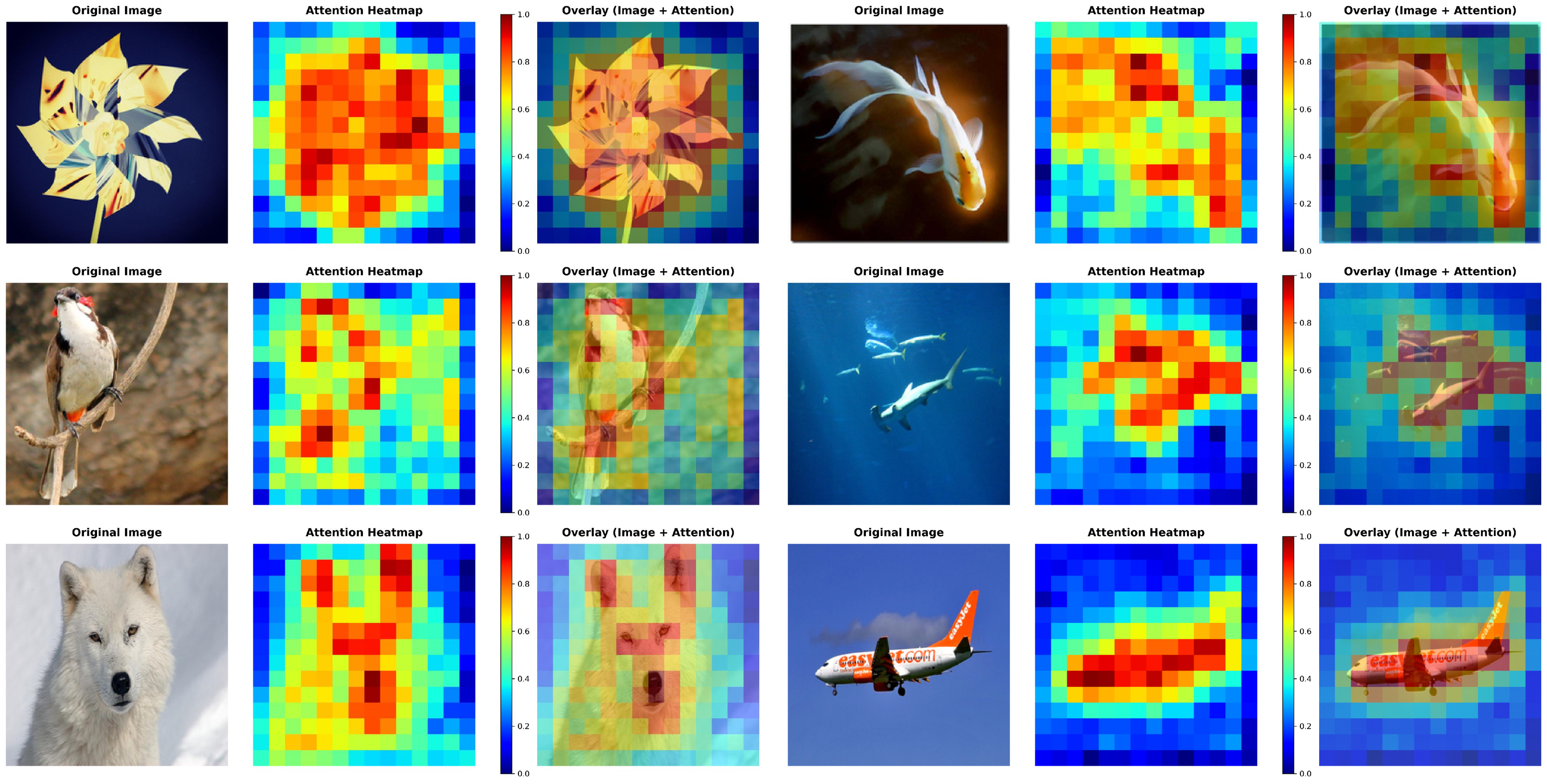}  % 设置页面宽度
    \caption{Attention heatmap for SCA (Ours) with SDT-V1 architecture on ImageNet-1K.}
    \label{fig:attention_heatmap_full}
\end{figure*}

\subsubsection{Semantic Segmentation}

This study employs the ADE20K semantic segmentation dataset \cite{zhou2017scene}, comprising over 20,000 training and 2,000 validation scene center images with meticulous pixel-level annotations of objects and their components. The dataset exhibits rich semantic diversity, encompassing 150 semantic categories covering environmental elements (sky, roads, grass, etc.) and discrete entities (people, vehicles, furniture, etc.).

For our experimental architecture, we utilize SDT-V3, pre-trained on ImageNet-1K, as the backbone network, integrated with PVT (Pyramid Vision Transformer) \cite{xie2021segformer,wang2021pyramid} for the semantic segmentation task. During model training, we set the batch size to 20 and the total number of iterations to 160,000. Our optimization strategy employs the AdamW optimizer with an initial learning rate of $1 \times 10^4$ and a polynomial decay of 0.9. Notably, we applied linear decay warm-up during the initial 150,000 iterations to enhance model stability. Finally, our work demonstrates significant performance improvements over the standard SDT-V3 architecture. Additionally, we visualize qualitative results of the model on the ADE20K dataset. As shown in the fig. \ref{fig:segmentation_results_full}, our segmentation results exhibit remarkable effectiveness, featuring precise boundary delineation and enhanced semantic consistency across diverse scene contexts.

\subsubsection{Event-based Tracking}

We propose an efficient event-based visual tracker based on the SDTrack pipeline \cite{shan2025sdtrack}, which locates targets by matching image pairs through visual feature comparisons between reference and current frames. Dataset details include: FE108 \cite{zhang2021object}: 108 event sequences (3000–5000 frames each) with annotated bounding boxes, diverse scenes, optimized for high-speed tracking; VisEvent \cite{wang2023visevent}: 60 RGB event-synchronized sequences (\(\sim\) 250,000 frames total), varying indoor/outdoor conditions, cross-modal robustness benchmark; FELT \cite{wang2024long}: 200 sequences, event accumulation within ultra-short windows (1--5 ms), challenging real-time scenarios (motion blur, occlusion, fast motion). The training schedule comprises 100 epochs for FE108 and VisEvent, and 300 epochs for FELT. Each epoch randomly samples 60k image pairs (max interval 200 frames) for FE108 and FELT, and 30k pairs for VisEvent, ensuring sample diversity. Optimization uses AdamW (initial LR=\(4 \times 10^{-4}\), decaying to \(4 \times 10^{-5}\) at 80\% epoch, weight decay=\(1 \times 10^{-4}\)).

\subsubsection{Source of Gain}
\label{appendix: source of gain}

If SCA's advantage merely comes from a generic architectural inductive bias, it should provide a uniform performance boost regardless of the input's frequency distribution. If, however, our core claim holds true—that SCA specifically recovers high-frequency information—its advantage over the baseline should amplify significantly on inputs dominated by high-frequency features, and diminish on inputs lacking high-frequency details.

\begin{figure*}[htbp]
    \centering    
    \includegraphics[width=\linewidth]{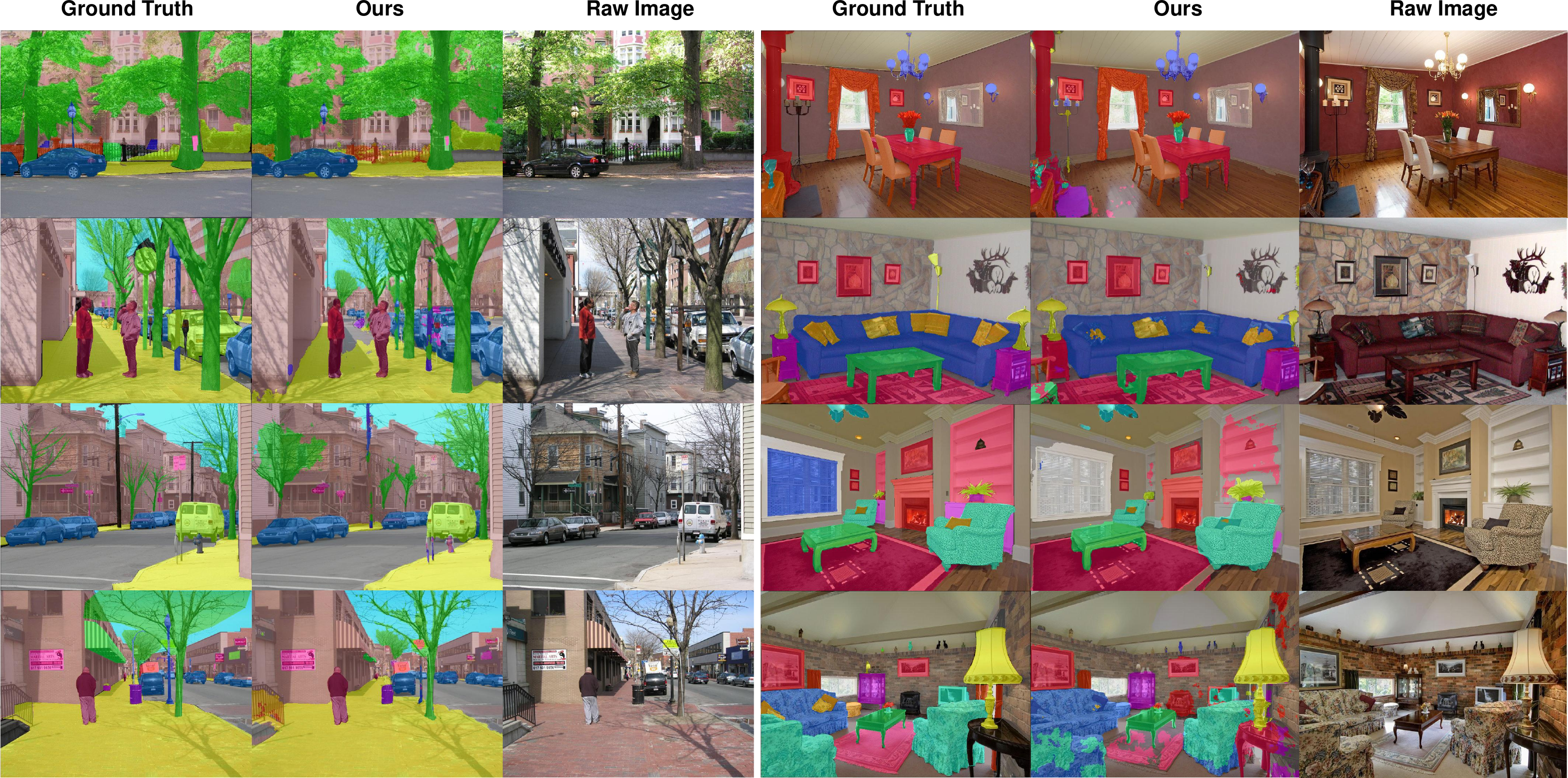}  % 设置页面宽度
    \caption{Visualization of semantic segmentation results on the ADE20K dataset.}
    \label{fig:segmentation_results_full}
\end{figure*}

To prove this, we processed the CIFAR100 \cite{krizhevsky2009learning} test-set using a High-Pass filter (retaining only sharp edges/textures) and a Low-Pass filter (Gaussian blur, removing all high frequencies). We evaluated the pre-trained QKFormer and QKFormer+SCA on these corrupted datasets.

\begin{table*}[htbp]
    \centering
    \caption{Performance comparison under different test-set conditions}
    \label{tab:performance comparision under different test-set conditions}
    \begin{tabular}{lccc}
        \toprule
        \textbf{Test Set Condition} & \textbf{QKFormer Baseline} & \textbf{QKFormer + SCA} & $\mathbf{\Delta}$ \textbf{(Gain)} \\
        \midrule
        Standard Image (Full spectrum)        & $81.15 \pm 0.18$ & $82.34 \pm 0.12$          & $+1.19\%$ \\
        Low-Pass Filter (High-freq removed)   & $77.12 \pm 0.09$ & $77.18 \pm 0.13$          & $+0.06\%$ \\
        High-Pass Filter (High-freq dominant) & $68.45 \pm 0.21$ & $\mathbf{74.80 \pm 0.16}$ & $\mathbf{+6.35\%}$ \\
        \bottomrule
    \end{tabular}
\end{table*}

As the Table.\ref{tab:performance comparision under different test-set conditions} clearly demonstrates, when high-frequency information is removed (Low-Pass), SCA's architectural inductive bias provides virtually zero gain (\(+0.06\)). However, when the task strictly relies on high-frequency edges (High-Pass), SCA massively outperforms the baseline (\(+6.35\%\)). This provides unambiguous, statistically robust evidence that SCA's performance leap is specifically attributable to its ability to preserve and rectify high-frequency spectral bias, precisely as theorized.

\subsubsection{Generality across Convolutional Architectures}
\label{appendix: generality across convolutional architectures}

By introducing a lightweight contrastive branch before the LIF activations in the residual blocks, we applied the core philosophy of SCA (the differential membrane-potential mechanism) to SEW-ResNet (Wide-7B-Net) and MS-ResNet. The results is shown as Table. \ref{tab:generality across convolutional architectures}. For the SEW-ResNet architecture, on the CIFAR-10 DVS dataset \cite{li2017cifar10}, this method improved the accuracy by 2.99\% at time step \(T=4\) and by 2.46\% at \(T=8\). For the MS-ResNet-18 architecture (at \(T=4\)), this method improved the accuracy by 1.97\% on the CIFAR-10 dataset \cite{lecun2002gradient} and by 2.09\% on the CIFAR-100 dataset \cite{krizhevsky2009learning}.

\begin{table*}[htbp]
    \centering
    \caption{Generality across Convolutional Architectures}
    \label{tab:generality across convolutional architectures}
    \resizebox{\textwidth}{!}{
    \begin{tabular}{lccccc}
        \toprule
        \textbf{Architecture} & \textbf{Params (M)} & \textbf{\(T\)} & \textbf{CIFAR-10 Acc. (\%)} & \textbf{CIFAR-100 Acc. (\%)} & \textbf{CIFAR-10 DVS Acc. (\%)} \\
        \midrule
        \multirow{2}{*}{SEW-ResNet \cite{fang2021deep}} & \multirow{2}{*}{1.19} & 4 & - & - & 64.80 \\
         & & 8 & - & - & 70.20 \\
        \midrule
        \multirow{2}{*}{SEW-ResNet (Ours)} & \multirow{2}{*}{1.19} & 4 & - & - & \textbf{67.79} \textcolor{violet}{(\(\uparrow\) 2.99)} \\
         & & 8 & - & - & \textbf{72.66} \textcolor{violet}{(\(\uparrow\) 2.46)} \\
        \midrule
        MS-ResNet-18 \cite{hu2024advancing} & 11.22 & 4 & 94.40 & 75.06 & - \\
        \midrule
        MS-ResNet-18 (Ours) & 11.22 & 4 & \textbf{96.37} \textcolor{violet}{(\(\uparrow\) 1.97)} & \textbf{77.15} \textcolor{violet}{(\(\uparrow\) 2.09)} & - \\
        \bottomrule
    \end{tabular}}
\end{table*}

\begin{figure*}[t]
    \centering    
    \includegraphics[width=0.92\linewidth]{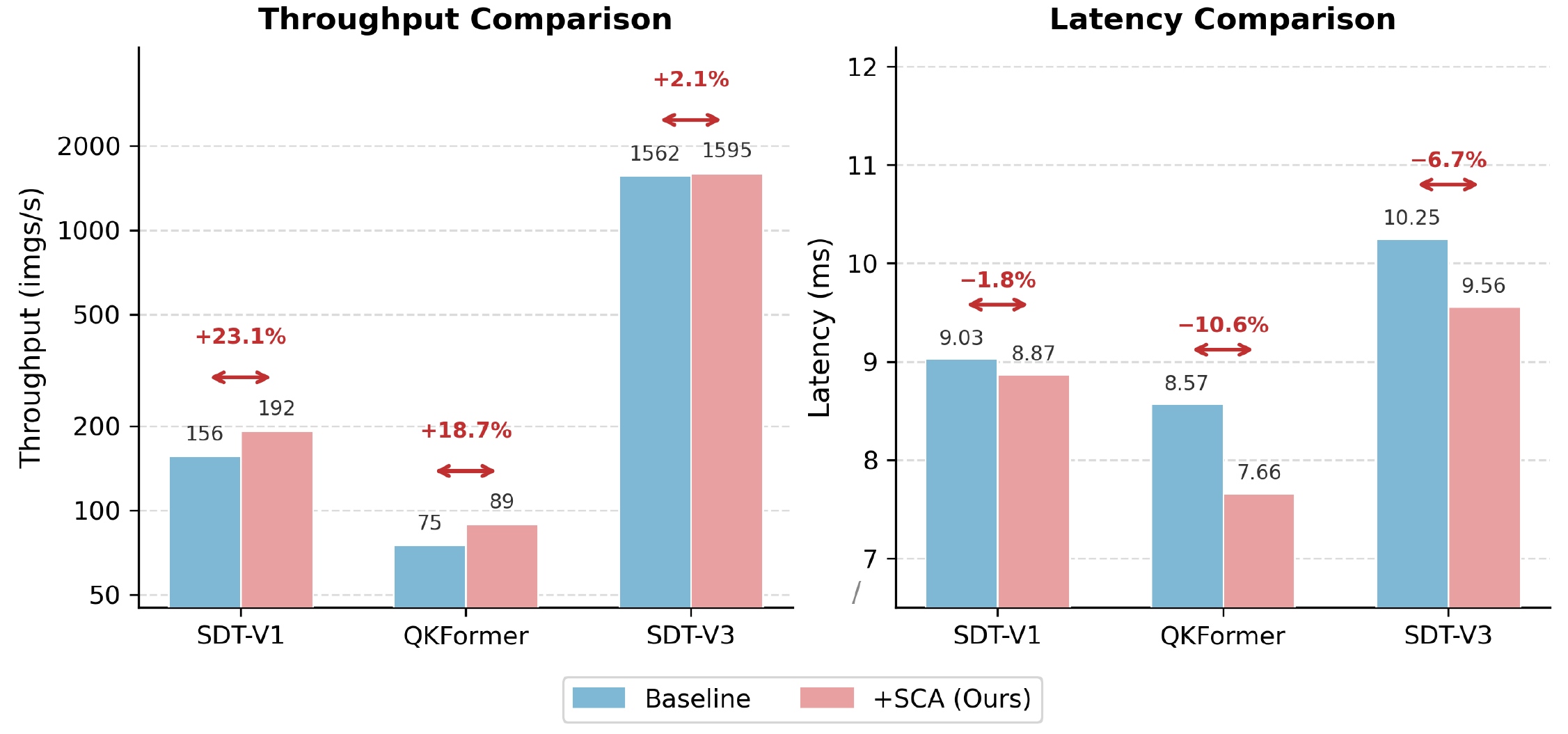}
    \caption{Throughput and latency comparison in baselines and our method.}
    \label{fig:thoughput and latency}
\end{figure*}

\subsubsection{Throughput and Inference Latency}
\label{appendix: throughput and inference latency}

To substantiate our claim of lower computational complexity, we measured the actual throughput and inference latency. The evaluation was conducted on a single NVIDIA RTX 4090 GPU with a batch size of 64. As shown in Fig.\ref{fig:thoughput and latency}, the theoretical complexity reduction from \(\mathcal{O}(TN^2C)\) to \(\mathcal{O}(TNnC)\) directly translates to measurable latency improvements.

\subsection{Limitations}
\label{appendix: limitations}
Limitations of this study include the lack of performance testing of SCA on larger-scale models, such as spike-based large language model, as well as limitations related to hardware deployment. These issues will be addressed in future research. In the future, we will evaluate SCA’s performance on larger-scale models and explore deploying SCA on hardware platforms such as Field-Programmable Gate Arrays (FPGAs) to assess its practical performance. In doing so, we will optimize appropriate data read/write streams and memory access schemes to enhance the model’s inference speed.

\end{document}